\documentclass{article}

\PassOptionsToPackage{numbers, sort&compress}{natbib}
\usepackage[preprint]{neurips_2025}

\usepackage[utf8]{inputenc}
\usepackage[hidelinks,hyperfootnotes=false]{hyperref}
\usepackage{url}

\usepackage{booktabs}
\usepackage{colortbl}

\usepackage{amsthm}
\usepackage{amsmath}
\usepackage{mathtools}
\usepackage{bm}

\usepackage{microtype}
\usepackage{xcolor}
\definecolor{oursblue}{HTML}{EAF4FF}
\definecolor{highlightpink}{HTML}{E34A92}
\definecolor{boxblue}{HTML}{5375F1}

\usepackage{enumitem}
\usepackage[most]{tcolorbox}
\usepackage{amssymb}

\usepackage{graphicx}
\usepackage{float}
\usepackage{tikz}
\usetikzlibrary{calc}

\usepackage{multirow}
\usepackage{tabularx}

\usepackage[bottom]{footmisc}
\usepackage[T1]{fontenc}
\usepackage[section]{placeins}

\newcommand{\E}{\mathbb{E}}
\newcommand{\R}{\mathbb{R}}
\newcommand{\bX}{\boldsymbol{X}}
\newcommand{\bZ}{\boldsymbol{Z}}
\newcommand{\bx}{\boldsymbol{x}}
\newcommand{\bz}{\boldsymbol{z}}
\newcommand{\mI}{\bm{I}}
\newcommand{\rf}{\mathtt{RF}}

\theoremstyle{plain}
\newtheorem{theorem}{Theorem}[section]
\newtheorem{proposition}[theorem]{Proposition}

\theoremstyle{definition}

\newtheorem{assumption}[theorem]{Assumption}
\theoremstyle{remark}
\newtheorem{remark}[theorem]{Remark}

\AtBeginEnvironment{tcolorbox}{\normalcolor}

\makeatletter
\if@preprint
  \renewcommand{\@notice}{}
\fi
\makeatother

\tcbset{
  colback=boxblue!7,
  colframe=boxblue!70!black,
  boxrule=0.6pt,
  arc=2pt,
}

\lstdefinestyle{paperpython}{
  language=Python,
  basicstyle=\ttfamily\fontsize{7.6}{9.0}\selectfont,
  keywordstyle=\color{boxblue!85!black}\bfseries,
  commentstyle=\color{black!48}\itshape,
  stringstyle=\color{highlightpink!80!black},
  identifierstyle=\color{black!88},
  numbers=left,
  numberstyle=\scriptsize\color{black!35},
  numbersep=5pt,
  xleftmargin=1.65em,
  showstringspaces=false,
  columns=fullflexible,
  keepspaces=true,
  breaklines=true,
  breakatwhitespace=true,
  tabsize=4,
  aboveskip=0pt,
  belowskip=0pt,
}

\newtcblisting{paperpythonlisting}[2][]{%
  enhanced jigsaw,
  breakable,
  listing only,
  listing options={style=paperpython},
  colback=black!1.2,
  colframe=black!13,
  coltitle=black!82,
  boxrule=0.45pt,
  arc=1.2pt,
  outer arc=1.2pt,
  left=0.8mm,
  right=1.0mm,
  top=1.0mm,
  bottom=1.0mm,
  before skip=0.55\baselineskip,
  after skip=0.55\baselineskip,
  borderline west={0.9pt}{0pt}{boxblue!80!black},
  fonttitle=\sffamily\footnotesize\bfseries,
  attach boxed title to top left={xshift=2mm,yshift=-1.2mm},
  boxed title style={
    enhanced jigsaw,
    colback=white,
    colframe=black!13,
    boxrule=0.45pt,
    arc=1pt,
    outer arc=1pt,
    left=1.2mm,
    right=1.2mm,
    top=0.45mm,
    bottom=0.45mm,
  },
  title={#2},
  #1
}

\newcommand{\sgn}{\mathtt{sign}}
\newcommand{\sgnrf}{\mathtt{signRF}}
\title{Signed Rectified Flow:\\ Negativity-Controlled Generation}

\author{%
  Runlong Liao\thanks{Co-first authors.}%
  \quad
  Baiyu Su\footnotemark[\value{footnote}]%
  \quad
  Lizhang Chen %
  \quad
  Qiang Liu %
  \\
  UT Austin%
}

\begin{document}

\maketitle

\vspace{-1.8em}
\begin{abstract}
    We introduce \emph{Signed Rectified Flow (Signed RF)}, a generalization of Rectified Flow that targets a signed measure
    $\pi^{\mathtt{sign}} = (1+\alpha)\,\pi^+ - \alpha \,\pi^-$,
    where $\alpha>0$, $\pi^+$ represents the distribution to promote, and $\pi^-$ represents the distribution to suppress.
    Although sampling from a signed measure is not well-defined, Signed RF induces a valid generative process that concentrates on the positive region of $\pi^{\mathtt{sign}}$ while provably excluding regions dominated by the negative component. This yields a principled framework for incorporating negative information and exclusion constraints into generative modeling. Theoretically, we analyze the signed continuity equation underlying Signed RF
    and explain how negative mass creates exclusion barriers through a charged-particle interpretation. Empirically, Signed RF leads to practical adaptive guidance algorithms. Across applications, Signed RF improves the fidelity--diversity trade-off on ImageNet, reduces nearest-neighbor similarity in anti-memorization tests, and reduces adversarial-prompt nudity in SD 3.5 while preserving CLIP and aesthetic scores.

\end{abstract}

\vspace{-0.8em}

\begin{figure}[H]
    \vspace{-0.5em}
    \centering
    \makebox[\textwidth][c]{%
        \begin{minipage}{1.0\textwidth}
            \centering
            \includegraphics[width=\textwidth]{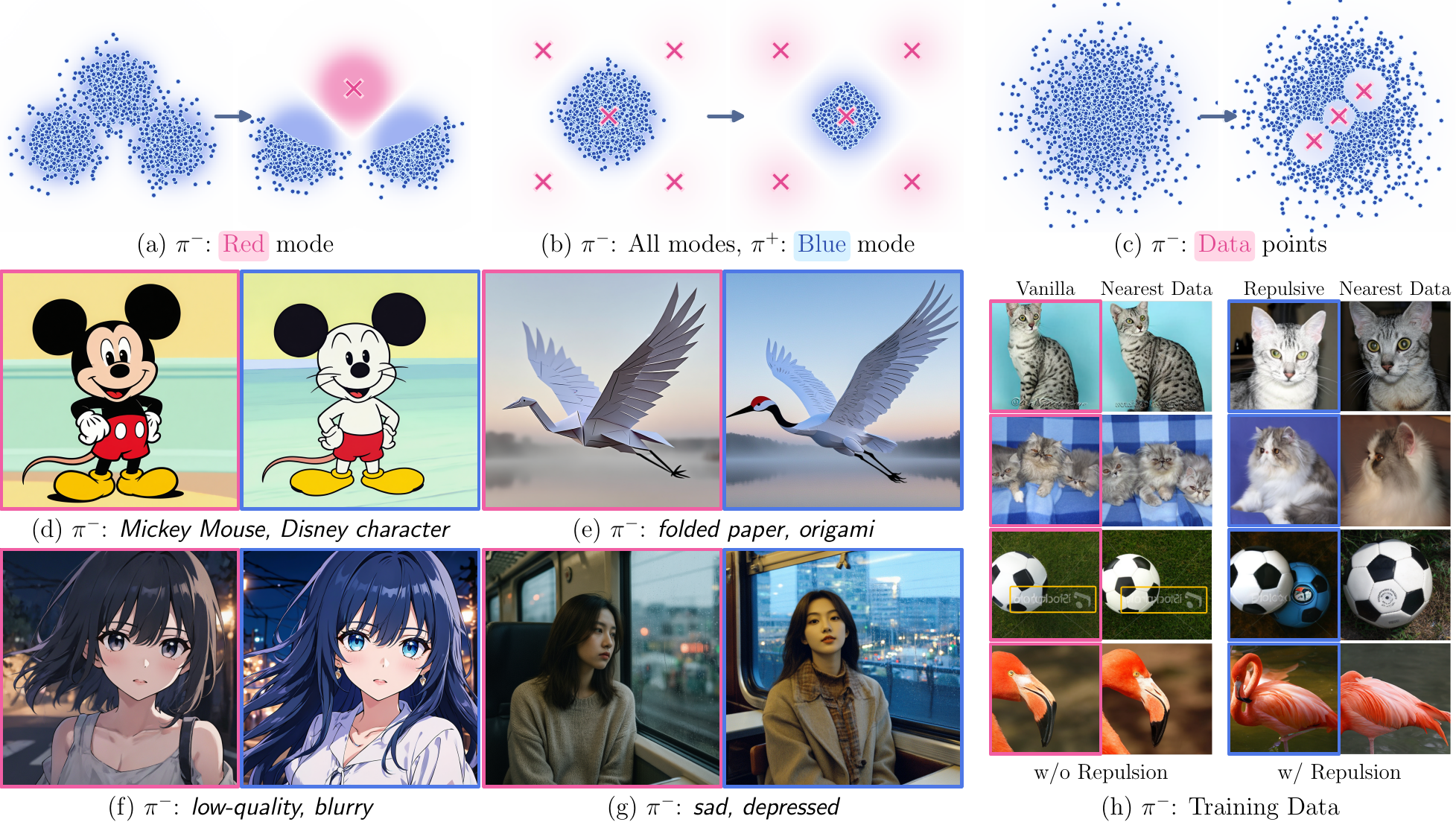}
            \vspace{-1.3em}
        \end{minipage}%
    }

    \caption{Signed Rectified Flow steers samples toward the \textcolor[HTML]{2E4FAF}{positive target \(\pi^+\)} while avoiding regions where the \textcolor[HTML]{E34A92}{negative target \(\pi^-\)} dominates.
        Top row: 2D toy examples. (a) suppresses an undesired mode; (b) strengthens the desired target mode; and (c) avoids generating protected data.
        Bottom row: image-generation examples.
        (d) preserves cartoon attributes in \(\pi^+\) (round black ears, red shorts, yellow shoes) while avoiding the copyrighted character identity in \(\pi^-\);
        (e) preserves a paper-crane scene in \(\pi^+\) while avoiding a folded-paper appearance in \(\pi^-\);
        (f)--(g) suppress undesired visual attributes in \(\pi^-\);
        and (h) mitigates data leakage by repelling samples from the training set.}

    \label{fig:signed_rf_teaser}
    \vspace{-1em}
\end{figure}

\vspace{-1.2em}
\section{Introduction}
Many generative tasks can be viewed as \emph{promoting} a desirable distribution \(\pi^+\) while \emph{suppressing} an undesirable one \(\pi^-\). In conditional generation, this corresponds to producing samples that better match the target condition and avoiding inconsistent outputs~\cite{wei2021finetuned, sanh2021multitask, ho2022classifier, dhariwal2021diffusion, rombach2022high, nichol2021glide, saharia2022photorealistic,liao2026momentum}. In safety and alignment settings, the objective is to generate value-aligned outputs while excluding harmful ones~\cite{bai2022training, bai2022constitutional, schramowski2023safe}. In preference learning, given preferred and dispreferred samples, the goal is to steer generation toward the preferred distribution~\cite{ouyang2022training, rafailov2023direct, black2023training, wallace2024diffusion, lee2023aligning}. Across these settings, the central challenge is to design mechanisms that reliably amplify desirable modes while discouraging undesirable ones.

For flow and diffusion models~\cite{ho2020denoising, song2020score, song2020denoising, liu2022flow, liu2022rectified, lipman2022flow, albergo2022building}, the standard approach is to apply guidance~\cite{dhariwal2021diffusion, ho2022classifier, nichol2021glide, saharia2022photorealistic, rombach2022high, liao2026momentum} at sampling time, where trajectories are steered by extrapolating between positive and negative fields~\cite{ho2022classifier}.
However, such procedures remain largely heuristic: their induced sampling distributions are poorly understood, making it unclear which regions of the data space are ultimately promoted or suppressed.

We propose \emph{signed measures} as a principled mechanism for incorporating negative information into generative modeling. Given a distribution $\pi^+$ to favor and $\pi^-$ to avoid, we consider the signed target
\[
  \pi^{\sgn}(\bx)
  =
  (1+\alpha)\,\pi^+(\bx)
  -
  \alpha\,\pi^-(\bx),
  \qquad \alpha>0.
\]
The goal is to use negativity to encode more than the mere absence of positive data: the signed target explicitly specifies what the model should avoid. Although \(\pi^{\sgn}\) is not generally a probability distribution and cannot be sampled from in the usual sense, we do not seek to sample from it directly. Instead, we use it as an intermediate object for constructing a valid
generative procedure that follows its positive part while avoiding regions
dominated by negative mass.

To achieve this, we formally extend Rectified Flow~\citep{liu2022flow, liu2022rectified} to signed targets by extrapolating the standard RF velocity formula to \(\pi^{\sgn}\). As illustrated in Fig.~\ref{fig:signed_density_evolution}, simulating the resulting dynamics from the source distribution generates samples from a dynamically reachable subset of the positive region of the signed target. Regions in which \(\pi^{\sgn}\) is negative are never sampled, while an additional unsampled \emph{ghost region} separates the reachable and negative regions. Signed RF therefore rectifies the signed target into a valid sampling law by preserving the signed density on its reachable region and assigning zero density elsewhere.

The paper is organized as follows. In Sec.~\ref{sec:rf_to_signed_rf}, we derive Signed RF by extending Rectified Flow from convex mixtures to signed targets. In Sec.~\ref{sec:signed_rf_sampling}, we characterize the induced sampling law through the notions of reachable and ghost regions, provide a charged-particle interpretation of the dynamics, and establish the main theoretical guarantees, including density preservation on the reachable region and nonpenetration into negative regions. In Sec.~\ref{sec:signed_rf_practice}, we derive a practical guidance-form implementation whose strength is determined by a density ratio, estimated either with a learned classifier or through online ODE tracking. Finally, in Sec.~\ref{sec:experiments}, we demonstrate Signed RF across a range of applications, including conditional generation, safe generation, trajectory planning, concept suppression, and anti-memorization.

\section{Method}
\subsection{From Rectified Flow to Signed Rectified Flow}
\label{sec:rf_to_signed_rf}

\paragraph{Rectified Flow.}
Let \(\pi_1\) be the target data distribution on \(\mathbb{R}^d\), and let
\(\pi_0\) be a noise source, such as \(\mathcal N(0,\mI)\).
Rectified Flow (RF)~\citep{liu2022flow,liu2022rectified,lipman2022flow,albergo2022building}
constructs a linear interpolation \(\bX_t=t\bX_1+(1-t)\bX_0\), where
\(\bX_0\sim\pi_0\) and \(\bX_1\sim\pi_1\) are independent. This interpolation
induces the ODE
\[
    \frac{\mathrm d \bZ_t}{\mathrm d t}=v_t^{\rf}(\bZ_t),
    \qquad \text{starting from } \bZ_0\sim\pi_0,
\]
whose velocity field is given by
\vspace{-0.2em}
\begin{align}
    \label{eq:rf_velocity}
    v_t^{\rf}(\bx)
    =
    \E\bigl[\bX_1 - \bX_0 \mid \bX_t = \bx\bigr].
\end{align}
Let \(\pi_t\) denote the marginal distribution of \(\bX_t\).
A key property of RF is marginal preservation: the ODE solution satisfies \(\bZ_t\sim\pi_t\) for all \(t\in[0,1]\), and in particular
\(\bZ_1\sim\pi_1\). In practice, the velocity field \(v_t^{\rf}\) is approximated by a neural
network \(v^\theta\) trained with
\[
    \mathcal L(\theta)
    =
    \int_0^1\E\Bigl[\|v^\theta(\bX_t,t)-(\bX_1-\bX_0)\|^2\Bigr]\,\mathrm{d}t.
\]
\paragraph{Rectified Flow for Convex Mixtures.}
We next examine how Rectified Flow behaves under mixtures of target
distributions. Let \((\pi_t^+,v_t^+)\) and \((\pi_t^-,v_t^-)\) denote the RF
marginals and velocity fields induced, respectively, by
\(\bX_t^+\!=\!(1-t)\bX_0+t\bX_1^+\) and
\(\bX_t^-\!=\!(1-t)\bX_0+t\bX_1^-\), where
\(\bX_0\sim\pi_0\), \(\bX_1^+\sim\pi_1^+\), and
\(\bX_1^-\sim\pi_1^-\). For the convex mixture
\(\pi_1^{\mathtt{mix}}\!=\!(1-w)\pi_1^+ + w\pi_1^-\), with
\(w\in[0,1]\), linearity of the RF marginal and flux gives
\(\pi_t^{\mathtt{mix}}\!=\!(1-w)\pi_t^+ + w\pi_t^-\).
Applying Eq.~\ref{eq:rf_velocity} to the mixture gives the RF velocity field
\vspace{-0.5em}
\begin{equation}
    v_t^{\mathtt{mix}}(\bx)
    =
    \frac{
        (1-w)\,\pi_t^+(\bx)\,v_t^+(\bx)
        +
        w\,\pi_t^-(\bx)\,v_t^-(\bx)
    }{
        (1-w)\,\pi_t^+(\bx)
        +
        w\,\pi_t^-(\bx)
    }.
    \label{eq:mixture_velocity}
\end{equation}
By the marginal-preservation property of RF, integrating
\(\dot{\bZ}_t\!=\!v_t^{\mathtt{mix}}(\bZ_t)\)
from \(\bZ_0 \!\sim\!\pi_0\) yields
\(\bZ_t\sim\pi_t^{\mathtt{mix}}\) for all \(t\in[0,1]\), and in particular
\(\bZ_1\sim\pi_1^{\mathtt{mix}}\).
For \(w\in[0,1]\), \(\pi_t^{\mathtt{mix}}\) remains a nonnegative,
normalized density for all \(t\), and the associated dynamics therefore define
a standard probability flow. A natural question, however, is what happens when
\(w\) is allowed to be negative.

\vspace{-0.8em}
\paragraph{Rectified Flow for Signed Mixtures.}
We now extend the mixture coefficient beyond the convex regime by setting
\(w\!=\!-\alpha\), where \(\alpha>0\). This gives the signed target
\(\pi_1^{\sgn}(\bx)\!\coloneqq\!(1+\alpha)\pi_1^+(\bx)-\alpha\pi_1^-(\bx)\),
which has unit total mass but need not be nonnegative. By linearity of the RF
marginals, the corresponding signed marginal is
\(\pi_t^{\sgn}(\bx)\!=\!(1+\alpha)\pi_t^+(\bx)-\alpha\pi_t^-(\bx)\) for
\(t\in[0,1]\). Substituting \(w\!=\!-\alpha\) into
Eq.~\ref{eq:mixture_velocity} yields the Signed RF velocity field
\begin{equation}
    v_t^{\sgnrf}(\bx)
    =
    \frac{
        (1+\alpha)\,\pi_t^+(\bx)\,v_t^+(\bx)
        -
        \alpha\,\pi_t^-(\bx)\,v_t^-(\bx)
    }{
        (1+\alpha)\,\pi_t^+(\bx)
        -
        \alpha\,\pi_t^-(\bx)
    }.
    \label{eq:signed_rf_velocity}
\end{equation}
The denominator is precisely \(\pi_t^{\sgn}(\bx)\), and hence
\(v_t^{\sgnrf}\) is well defined away from the zero set
\(\Omega_t^0\!\coloneqq\!\{\bx:\pi_t^{\sgn}(\bx)=0\}\). Although \(\pi_t^{\sgn}\) may fail to be a probability density, its initial
marginal remains the valid source distribution:
\(\pi_0^{\sgn}\!=\!\pi_0\).
We therefore define Signed RF through the source-initialized ODE
\(\dot{\bZ}_t\!=\!v_t^{\sgnrf}(\bZ_t)\), with
\(\bZ_0\sim\pi_0\), and denote the law of \(\bZ_t\) by
\(\pi_t^{\sgnrf}\). As we establish below, source-initialized trajectories remain in the positive
region \(\Omega_t^+\!\coloneqq\!\{\bx:\pi_t^{\sgn}(\bx)>0\}\) and do not
cross the singular zero set \(\Omega_t^0\). The singular boundary therefore
acts as a repulsive barrier for the realized dynamics, allowing the ODE to
remain well defined along sampled trajectories even when the signed marginal
develops negative regions.
\vspace{-0.8em}

\subsection{Sampling Behavior of Signed RF}
\label{sec:signed_rf_sampling}
\vspace{-0.5em}
By construction, \(\pi_t^{\sgnrf}\), as the law of the source-initialized ODE, is a valid probability distribution. In contrast, \(\pi_t^{\sgn}\) has unit total mass but may take negative values. Therefore, once negative regions emerge, \(\pi_t^{\sgnrf}\) can no longer coincide globally with \(\pi_t^{\sgn}\). This raises the central question: which part of the signed marginal is realized by the source-initialized flow? The one-dimensional example in Fig.~\ref{fig:signed_density_evolution} illustrates the resulting structure. The black curve shows the signed marginal \(\pi_t^{\sgn}\), while the blue histogram shows the empirical density obtained by simulating the ODE driven by \(v_t^{\sgnrf}\) from \(\bZ_0\sim\pi_0\). Their evolution reveals three key phenomena.

\vspace{-0.6em}
\begin{figure}[h!]
    \centering
    \makebox[\linewidth][c]{%
        \hspace*{-0.02\linewidth}%
        \includegraphics[width=1.028\linewidth]{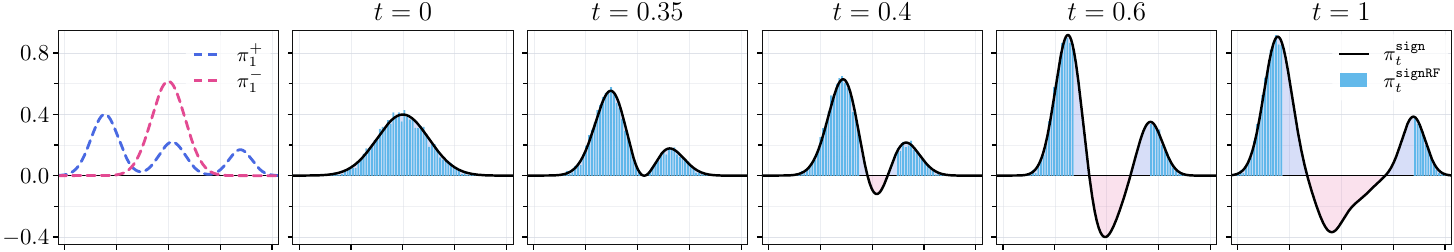}%
    }
    \vspace{-1.3em}
    \caption{1D evolution of the signed density \(\pi_t^{\sgn}\) and the
    induced sampling law \(\pi_t^{\sgnrf}\). Signed RF rejects negative
    regions and matches the signed density on a reachable subset of the
    positive region.}
    \label{fig:signed_density_evolution}
\end{figure}

\vspace{-1.2em}
\begin{itemize}[leftmargin=1.2em]

    \item \textbf{Trajectories remain in the positive region.}
    The source-initialized trajectories remain in the positive region \(\Omega_t^+\!\coloneqq\!\{\bx:\pi_t^{\sgn}(\bx)>0\}\) and never enter the negative region \(\Omega_t^-\!\coloneqq\!\{\bx:\pi_t^{\sgn}(\bx)<0\}\). However, they generally occupy only a subset of \(\Omega_t^+\). We define \(\Omega_t^r\!\coloneqq\!\operatorname{supp}(\pi_t^{\sgnrf})\) as the \emph{reachable region}, shown in blue in Fig.~\ref{fig:signed_density_evolution}.

    \item \textbf{Signed RF rectifies the signed marginal.}
    On the reachable region \(\Omega_t^r\), the sampling density
    \(\pi_t^{\sgnrf}(\bx)\) coincides exactly with the signed marginal \(\pi_t^{\sgn}(\bx)\). Outside this region, no mass is realized by the source-initialized flow. That is,
    \vspace{-0.1em}
    \begin{equation}
        \pi_t^{\sgnrf}(\bx)
        =
        \pi_t^{\sgn}(\bx)\,\mathbf{1}\{\bx \in \Omega_t^r\}.
        \label{eq:signed_rf_sampling_law}
    \end{equation}
    \vspace{-1.8em}

    \item \textbf{The ghost region.} The remaining part of the positive region, \(\Omega_t^g\!\coloneqq\!\Omega_t^+\setminus\Omega_t^r\), is not reached by trajectories initialized from \(\pi_0\). We call it the \emph{ghost region}. Since both \(\pi_t^{\sgnrf}\) and \(\pi_t^{\sgn}\) have unit total mass, Eq.~\ref{eq:signed_rf_sampling_law} implies
    \vspace{-0.1em}
    \[
        \int_{\Omega_t^r} \pi_t^{\sgn}(\bx)\,\mathrm{d}\bx = 1
        \qquad\Longrightarrow\qquad
        \int_{\Omega_t^g} \pi_t^{\sgn}(\bx)\,\mathrm{d}\bx
        +
        \int_{\Omega_t^-} \pi_t^{\sgn}(\bx)\,\mathrm{d}\bx
        = 0.
    \]
    Thus, the positive mass contained in the ghost region exactly balances the magnitude of the negative mass excluded from the sampling law.

\end{itemize}
\vspace{-0.4em}

\paragraph{Remark.}
The identities above characterize the mass decomposition, but the location of the reachable region \(\Omega_t^r\) is determined implicitly by the flow dynamics. In general, it need not admit a closed-form representation; all that is known is that it lies inside the positive region and is separated from the negative region \(\Omega_t^-\) by the ghost region. Thus, Signed RF can be viewed as ``rectifying'' the signed marginal
\(\pi_t^{\sgn}\) into a valid probability law \(\pi_t^{\sgnrf}\): it preserves the signed density on a reachable subset of the positive region and sets it to
zero elsewhere. Equivalently, \(\pi_t^{\sgnrf}\) is one of the
total-variation-optimal nonnegative approximations to \(\pi_t^{\sgn}\), solving
\(
    \min_{\rho\in\mathcal P}
    \mathrm{TV}(\rho,\pi_t^{\sgn})
\)
over the set of valid probability distributions \(\mathcal P\).
See Proposition~\ref{prop:tv_optimal_probability_approximation}. The corresponding
region decomposition is summarized in Fig.~\ref{fig:signed_rf_region_decomposition}.

\begin{figure}[t]
    \vspace{-2.0em}
    \hspace{-1.2em}
    \centering
    \makebox[\textwidth][c]{%
        \begin{minipage}{1.01\textwidth}
            \centering
            \includegraphics[width=\textwidth]{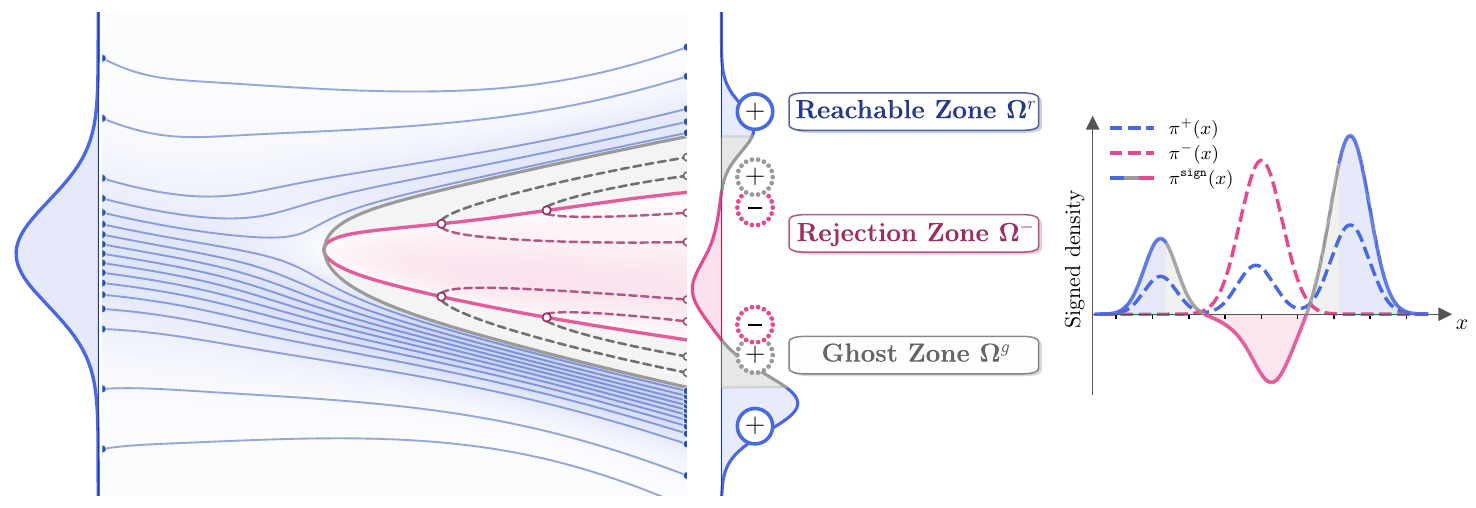}
            \vspace{-1em}
        \end{minipage}%
    }
    \vspace{-0.8em}
    \caption{\textbf{Signed RF sample-region decomposition.}
    Left: Signed RF dynamics partition the space into reachable, negative
    (rejection), and
    ghost regions. Right: the terminal signed density \(\pi_1^{\sgn}\).
    The induced sampling law is supported on the reachable region, where it
    coincides with the signed density, while the negative and ghost regions remain
    unobserved by the source-initialized sampler.}
    \label{fig:signed_rf_region_decomposition}
    \vspace{-1.0em}
\end{figure}

\paragraph{The physical picture.}
How does the ghost region arise? A useful intuition is to interpret the Signed RF dynamics as the motion of charged particles. Starting from the terminal signed target \(\pi_1^{\sgn}\), trace the dynamics backward toward \(t=0\). Imagine placing a particle at each \(\bx\) at \(t=1\), with charge determined by the sign of \(\pi_1^{\sgn}(\bx)\), and evolving it backward according to \(\dot{\bZ}_t=v_t^{\sgnrf}(\bZ_t)\). Its fate depends on the region from which it originates.
\begin{itemize}[leftmargin=1.2em]
    \item \textbf{Reachable particles.}
    Positive particles originating from the reachable region \(\Omega_1^r\) admit well-defined backward trajectories all the way to \(t=0\). These trajectories remain in the positive region and, collectively, recover the source distribution \(\pi_0\) at \(t=0\). Reversing them in time yields precisely the trajectories obtained by simulating Signed RF forward from \(\bZ_0\sim\pi_0\).

    \item \textbf{Ghost and negative particles.}
    Positive particles originating from the ghost region \(\Omega_1^g\), along with negative particles originating from \(\Omega_1^-\), do not admit backward trajectories that reach \(t=0\). Instead, they encounter the moving zero set \(\Omega_t^0\!\coloneqq\!\{\bx:\pi_t^{\sgn}(\bx)=0\}\), where the velocity field becomes singular. In the particle picture, positive and negative particles meet and annihilate at this boundary, reflecting the equal and opposite signed masses of the ghost and negative regions. Equivalently, viewed forward in time, the singular boundary can be interpreted as creating positive--negative particle pairs. The positive branch forms the ghost region, while the negative branch forms \(\Omega_1^-\). These ``dark'' particles are invisible to the source-initialized sampler because they emerge from the singular boundary at intermediate times rather than being transported from \(\pi_0\).

\end{itemize}

\vspace{-1.0em}

\begingroup
\setlength{\abovedisplayskip}{3pt}
\setlength{\belowdisplayskip}{3pt}
\setlength{\abovedisplayshortskip}{1pt}
\setlength{\belowdisplayshortskip}{1pt}

\paragraph{Continuity equation.}
Why do \(\pi_t^{\sgn}\) and \(\pi_t^{\sgnrf}\) coincide on the reachable
region, as in Eq.~\ref{eq:signed_rf_sampling_law}? The key is that the signed
marginal and the source-initialized sampling law obey the same continuity
equation on this region. Following standard RF theory, each branch satisfies
\(
    \partial_t \pi_t^\pm+\nabla\!\cdot(\pi_t^\pm v_t^\pm) = 0,
\)
which ensures marginal preservation for each branch.
Taking the corresponding signed combination yields
\(
    \partial_t\pi_t^{\sgn}
    +
    \nabla\!\cdot\!\left(
        (1+\alpha)\pi_t^+v_t^+
        -
        \alpha\pi_t^-v_t^-
    \right)
    =0.
\)
By the definition of \(v_t^{\sgnrf}\) in
Eq.~\ref{eq:signed_rf_velocity}, the signed marginal therefore satisfies
\[
    \partial_t\pi_t^{\sgn}
    +
    \nabla\!\cdot\!\left(\pi_t^{\sgn}v_t^{\sgnrf}\right)
    =0.
\]
Thus, \(\pi_t^{\sgn}\) is a signed solution of the continuity equation driven
by \(v_t^{\sgnrf}\). However, this alone does not make it the law of
\(\bZ_t\), since \(\pi_t^{\sgn}\) may take negative values.

Now consider the restriction of the signed marginal to the reachable region,
\(
    \bar{\pi}_t(\bx)
    \coloneqq
    \pi_t^{\sgn}(\bx)\mathbf{1}\{\bx\in\Omega_t^r\}.
\)
The region \(\Omega_t^r\) is transported by the same flow, so its moving
boundary carries no additional flux. Consequently, \(\bar{\pi}_t\) satisfies
the same continuity equation and remains a valid probability density. Under
the standard regularity conditions stated in Appendix~\ref{app:theory}, it must
therefore coincide with the law of the source-initialized ODE:
\(
    \pi_t^{\sgnrf}(\bx)
    =
    \pi_t^{\sgn}(\bx)\mathbf{1}\{\bx\in\Omega_t^r\}.
\)
The appendix also establishes that source-initialized trajectories remain in
the positive region and never cross into \(\Omega_t^-\).

\subsection{Signed Rectified Flow in Practice}
\label{sec:signed_rf_practice}
For practical implementation, it is useful to rewrite Eq.~\ref{eq:signed_rf_velocity} in a guidance-like form. Define the density ratio \(r_t(\bx)\!\coloneqq\!\pi_t^-(\bx)/\pi_t^+(\bx)\) and \(\Delta v_t(\bx)\!\coloneqq\!v_t^+(\bx)-v_t^-(\bx)\). Then, wherever \(\pi_t^{\sgn}(\bx)\neq0\),
\begin{equation}
    v_t^{\sgnrf}(\bx)
    =
    v_t^+(\bx)
    +
    \lambda_t^\alpha(\bx)\,\Delta v_t(\bx),
    \qquad
    \lambda_t^{\alpha}(\bx)
    =
    \frac{\alpha\,r_t(\bx)}
    {(1+\alpha)-\alpha\,r_t(\bx)}.
    \label{eq:signed_rf_guidance}
\end{equation}
Thus, Signed RF takes \(v_t^+\) as its base field and applies time- and
state-dependent guidance in the direction \(v_t^+-v_t^-\). Implementing this
velocity requires two ingredients: (i) the branch velocities \(v_t^+\) and
\(v_t^-\), obtained either from a single model under different conditionings or
from separately trained models; and (ii) the density ratio \(r_t(\bx)\), for
which we consider two estimators below.

\paragraph{Classifier-based.}
The ratio \(r_t(\bx)\) can be estimated with a binary classifier trained on
noisy states \(\bx_t\). Given a balanced dataset with
\(\bx_t^+\sim\pi_t^+\) and \(\bx_t^-\sim\pi_t^-\), both obtained from the
standard RF interpolation, one can train \(p_t^\phi(y\mid\bx_t)\) with the
binary cross-entropy loss
\begin{equation*}
    \mathcal{L}(\phi)
    =
    -\E_{\bx_t^+ \sim \pi_t^+}\left[\log p_t^\phi(y=+\mid \bx_t^+)\right]
    -\E_{\bx_t^- \sim \pi_t^-}\left[\log p_t^\phi(y=-\mid \bx_t^-)\right].
\end{equation*}
Under balanced sampling, the Bayes-optimal classifier satisfies
\(p_t^*(y=-\mid\bx)/p_t^*(y=+\mid\bx)
=\pi_t^-(\bx)/\pi_t^+(\bx)=r_t(\bx)\).
Thus, the density ratio in Eq.~\ref{eq:signed_rf_guidance} can be obtained from the classifier odds.

\paragraph{Online ratio tracking.}
The classifier-based approach requires training an auxiliary model to estimate the density ratio. Alternatively, the ratio can be tracked directly along each Signed RF trajectory. Let \(u_t\!\coloneqq\!\log r_t(\bz_t)\), where \(\dot{\bz}_t=v_t^{\sgnrf}(\bz_t)\). The log-density ratio evolves according to
\begin{equation}
    \dot u_t
    =
    \nabla\!\cdot\Delta v_t(\bz_t)
    +
    \Delta v_t(\bz_t)^\top s_t^-(\bz_t)
    +
    \lambda_t^\alpha(\bz_t)\,\Delta v_t(\bz_t)^\top
    \big(s_t^-(\bz_t) - s_t^+(\bz_t)\big),
    \label{eq:online_ratio_tracking}
\end{equation}
where \(s_t^\pm=\nabla\log\pi_t^\pm\) are the branch score functions and
\(u_0=0\). For RF with a Gaussian source \(\pi_0\), Tweedie's formula gives
\(s_t^\pm(\bx) = (t\,v_t^\pm(\bx)-\bx)/(1-t)\)~\citep{liu2025let}.
The divergence term can be evaluated exactly when tractable or estimated using Hutchinson's trace estimator~\citep{hutchinson1989stochastic}. Jointly integrating Eq.~\ref{eq:online_ratio_tracking} with the Signed RF ODE therefore tracks the density ratio along the trajectory as \(r_t(\bz_t)=\exp(u_t)\).

\paragraph{Stabilization.}
In numerical sampling, finite step sizes may cause the denominator of \(\lambda_t^\alpha\) to become small, leading to excessively large guidance scales. A simple practical modification is to replace the denominator with
\(\max\bigl((1+\alpha)-\alpha\,r_t(\bz_t),\,\varepsilon\bigr)\) and, optionally, cap \(\lambda_t^\alpha\) at \(\lambda_{\max}\).

\endgroup

\section{Experiments}
\label{sec:experiments}

\subsection{Toy Datasets}

\paragraph{Gaussian mixture.}
We construct 2D Gaussian-mixture toys with positive target \(\pi^{+}\) and negative target \(\pi^{-}\).
We train separate RF models for the two targets, yielding the velocity fields \(v_t^+\) and \(v_t^-\), and compute the density ratio \(\pi_t^-(\bx)/\pi_t^+(\bx)\) in closed form.
We compare standard constant guidance,
\(
v_t(\bx)=v_t^{+}(\bx)+\omega\bigl(v_t^{+}(\bx)-v_t^{-}(\bx)\bigr),
\)
with Signed RF,
\(
v_t(\bx)=v_t^{+}(\bx)+\lambda_t^{\alpha}(\bx)\bigl(v_t^{+}(\bx)-v_t^{-}(\bx)\bigr).
\)

Fig.~\ref{fig:gaussian_mixture_guidance} visualizes the resulting sampling behavior. Constant guidance cannot reliably separate the positive and negative modes: weak guidance leaves samples near negative modes, whereas strong guidance over-repels the trajectories and pushes samples into low-density regions under \(\pi^+\). In contrast, Signed RF avoids the negative modes while preserving the positive modes and their coverage. Additional Gaussian-mixture configurations in Sec.~\ref{sec:additional_gaussian_mixtures} exhibit the same behavior.

\begin{figure}[t]
    \vspace{-1.5em}
    \centering
    \makebox[\textwidth][c]{%
        \hspace{-1.5em}%
        \includegraphics[width=\textwidth]{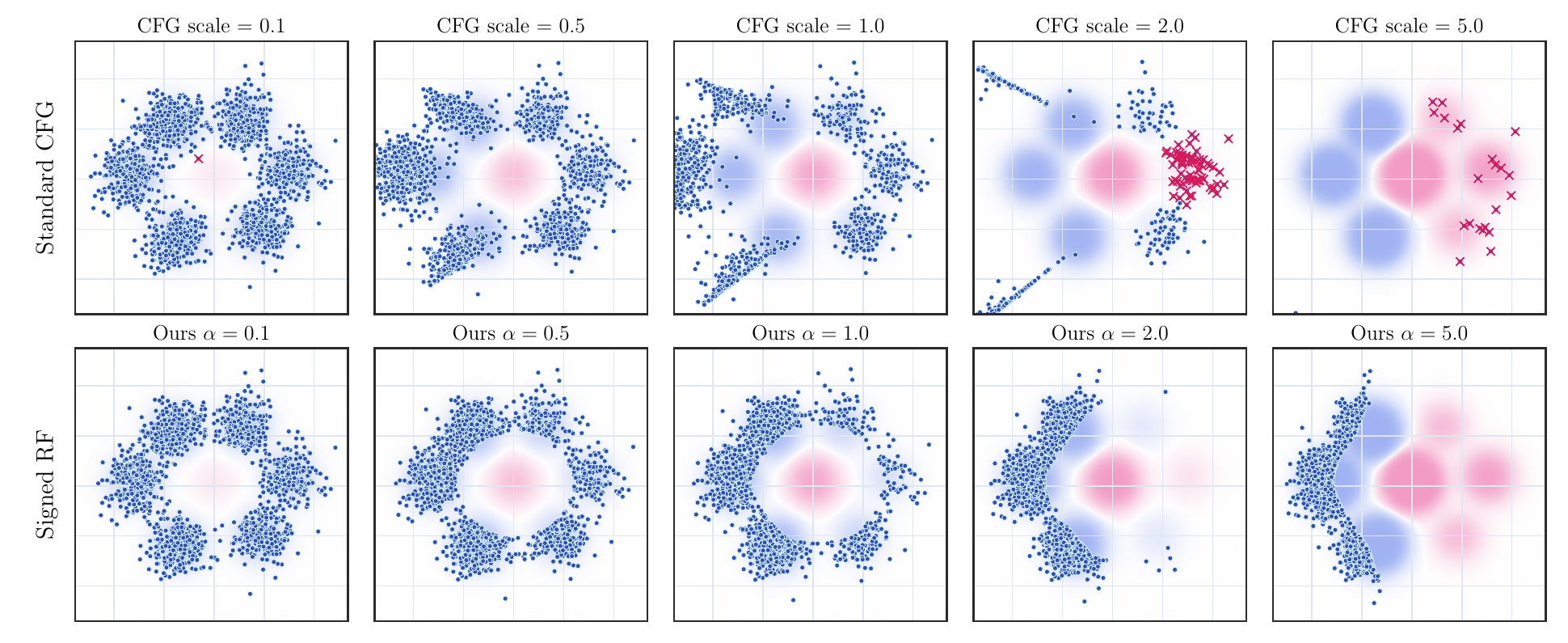}%
    }
    \vspace{-1.5em}

    \caption{\textbf{2D Gaussian-mixture comparison.}
    Top: constant guidance with varying scales.
    Bottom: Signed RF with varying \(\alpha\). The background visualizes the signed target
    \(\pi^{\sgn}\!=\!(1+\alpha)\pi^+-\alpha\pi^-\), with
    \textcolor{boxblue}{blue} and \textcolor{highlightpink}{pink} indicating
    positive and negative regions, respectively.}
    \label{fig:gaussian_mixture_guidance}
    \vspace{-1em}
\end{figure}

\vspace{-0.5em}

\paragraph{PointMaze navigation.}
We use the PointMaze dataset~\cite{minari,fu2020d4rl} to illustrate the \emph{missing negative data} problem. The offline dataset contains only collision-free demonstrations, from which we train a start--goal-conditioned positive flow \(v^+\). These demonstrations specify where valid trajectories should lie, but provide no explicit supervision for where trajectories must not go. In particular, wall interiors are absent from the dataset, yet their absence alone does not constitute a negative constraint. As a result, model approximation and generalization can still assign probability mass to these unobserved regions, producing wall-crossing trajectories despite training exclusively on valid demonstrations.

Signed RF provides a natural way to supply the missing negative information. We define wall-interior points as a negative distribution, train a negative flow \(v^-\) together with a density-ratio classifier, and use Signed RF to repel the planner from these explicitly modeled invalid regions. As shown in Fig.~\ref{fig:pointmaze_results}, this removes wall crossings while preserving broad coverage of feasible paths. In contrast, constant guidance exposes the usual scale trade-off: weak guidance fails to enforce the constraint, whereas strong guidance over-repels trajectories and reduces path diversity.

\begin{figure}[H]
    \centering
    \vspace{-0.4em}
    \hfill
    \begin{minipage}[c]{0.70\linewidth}
        \centering
        \includegraphics[width=\linewidth]{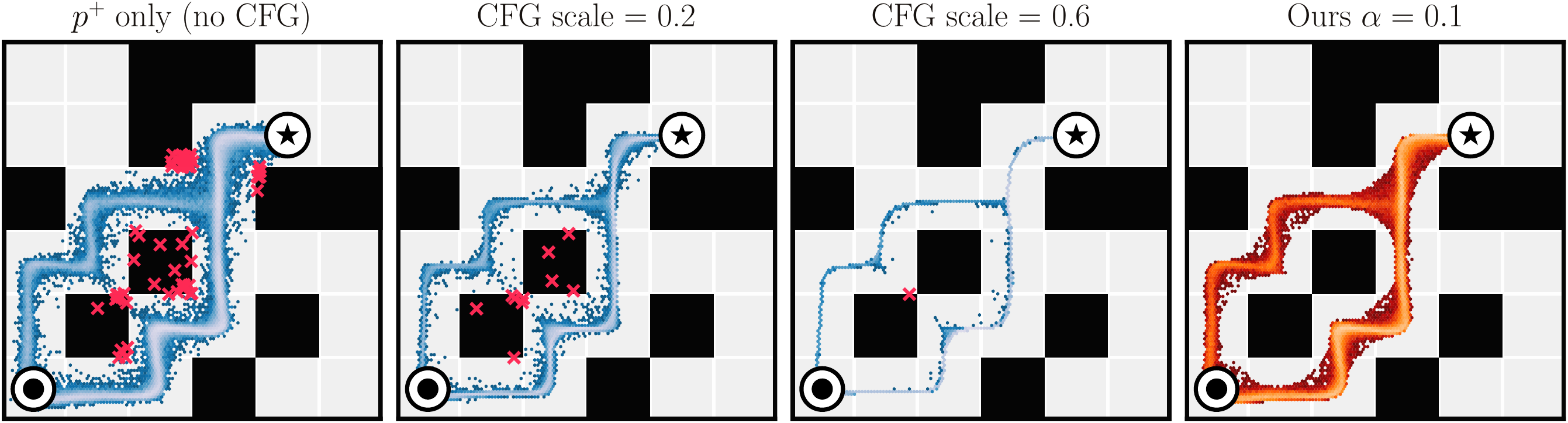}
    \end{minipage}
    \hfill
    \begin{minipage}[c]{0.25\linewidth}
        \centering
        \includegraphics[width=\linewidth]{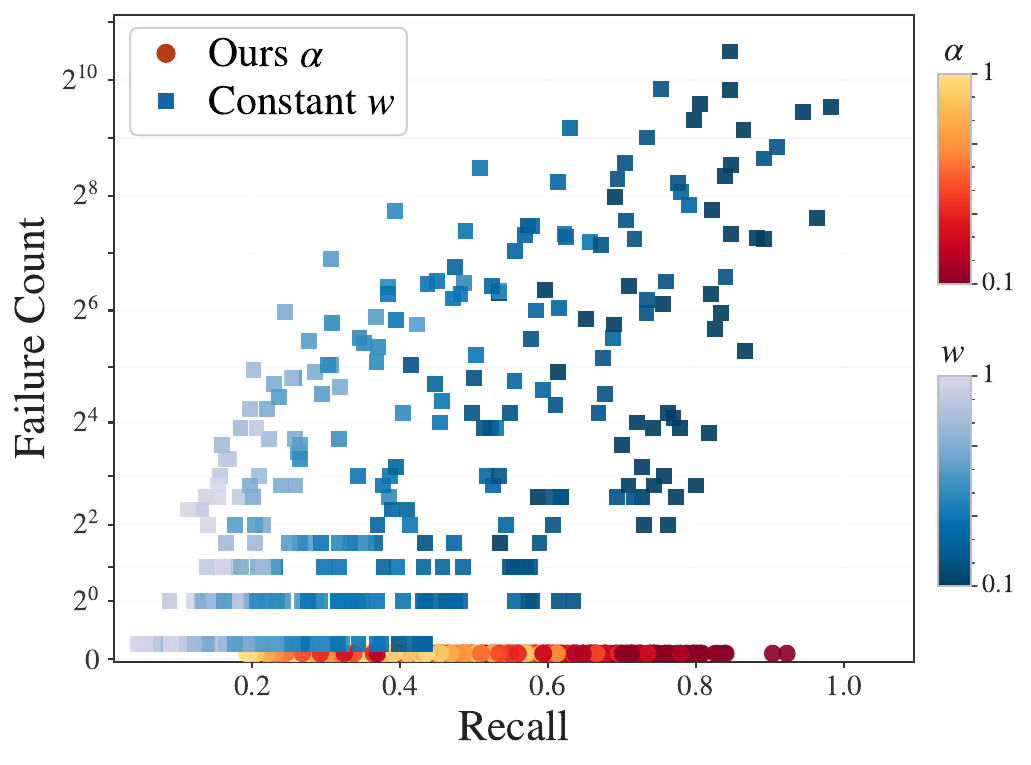}
        \vspace{-1.6em}
    \end{minipage}

    \vspace{-0.1em}
    \caption{\textbf{The missing negative data.}
    A positive-only planner produces wall-crossing trajectories (a).
    Constant guidance trades safety for path diversity: weak guidance leaves
    violations (b), while strong guidance collapses the trajectories (c).
    Signed RF avoids obstacles while preserving diverse feasible paths (d). The Pareto plot compares the number of failures and path diversity (e).}
    \label{fig:pointmaze_results}
    \vspace{-1.0em}
\end{figure}

\subsection{Signed RF as State-Aware CFG}
\label{sec:imagenet_cfg}
We evaluate ImageNet \(256\times256\) class-conditional generation using a pretrained Rectified Flow model~\citep{deng2009imagenet,lq2024rectifiedflow}. In this setting, \(v_t^+(\bx_t,c)\) and \(v_t^-(\bx_t)\) are the conditional and unconditional velocity fields, respectively. We estimate the density ratio using a timestep- and class-conditioned ViT classifier trained on positive--negative latent pairs. We report FID~\citep{heusel2017gans}, IS~\citep{salimans2016improved}, and P/R~\citep{sajjadi2018assessing} on \(50\mathrm{K}\)
generated samples.

As shown in Fig.~\ref{fig:imagenet_cfg_comparison}, Signed RF consistently improves FID across all NFE budgets. With 16 NFEs, for example, it reduces FID from \(2.38\) with CFG to \(1.82\). The precision--recall curves further show a better fidelity--diversity trade-off: while both methods improve precision at the expense of recall, Signed RF maintains higher recall at comparable precision.

We further verify that the density-ratio-based guidance is stable in practice. Performance varies little across classifier checkpoints and remains robust to moderate choices of the guidance cap \(\lambda_{\max}\). The
resulting \(\lambda_t^\alpha(\bx_t,c)\) is not a fixed guidance scale or time schedule: it varies across seeds within the same class
and across classes for the same seed. The auxiliary classifier adds only
modest overhead, increasing runtime and peak GPU memory by approximately
\(10\%\), while requiring roughly \(10\%\) of the original model's pretraining
compute in our setup. Additional implementation details, ablations, and
comparisons with other guidance methods are provided in
Appendix~\ref{sec:imagenet_cfg_details}.

\begin{figure*}[t]
    \vspace{-1em}
    \centering
    \setlength{\tabcolsep}{4pt}
    \renewcommand{\arraystretch}{1.02}
    \small

    \begin{minipage}[c]{0.42\linewidth}
        \centering

        {\fontsize{8.5}{10}\selectfont
        \begin{tabular}{c c c c c c}
            \toprule
            NFE &  & FID $\downarrow$ & IS $\uparrow$ & Prec.$\uparrow$ & Rec.$\uparrow$ \\
            \midrule

            \multirow{2}{*}{4}
            & CFG           & 10.3 & 224.2 & \textbf{0.765} & 0.311 \\
            & \cellcolor{oursblue}\textbf{Ours}
            & \cellcolor{oursblue}\textbf{6.68}
            & \cellcolor{oursblue}\textbf{229.2}
            & \cellcolor{oursblue} 0.750
            & \cellcolor{oursblue}\textbf{0.474} \\
            \midrule

            \multirow{2}{*}{8}
            & CFG           & 3.96 & 284.2 & \textbf{0.839} & 0.481 \\
            & \cellcolor{oursblue}\textbf{Ours}
            & \cellcolor{oursblue}\textbf{2.65}
            & \cellcolor{oursblue}\textbf{305.9}
            & \cellcolor{oursblue}0.805
            & \cellcolor{oursblue}\textbf{0.570} \\
            \midrule

            \multirow{2}{*}{16}
            & CFG           & 2.38 & 274.9 & \textbf{0.828} & 0.566 \\
            & \cellcolor{oursblue}\textbf{Ours}
            & \cellcolor{oursblue}\textbf{1.82}
            & \cellcolor{oursblue}\textbf{296.8}
            & \cellcolor{oursblue}0.801
            & \cellcolor{oursblue}\textbf{0.615} \\
            \midrule

            \multirow{2}{*}{32}
            & CFG           & 1.87 & 263.2 & \textbf{0.815} & 0.598 \\
            & \cellcolor{oursblue}\textbf{Ours}
            & \cellcolor{oursblue}\textbf{1.51}
            & \cellcolor{oursblue}\textbf{288.6}
            & \cellcolor{oursblue}0.800
            & \cellcolor{oursblue}\textbf{0.629} \\
            \midrule

            \multirow{2}{*}{64}
            & CFG           & 1.73 & 272.4 & \textbf{0.821} & 0.601 \\
            & \cellcolor{oursblue}\textbf{Ours}
            & \cellcolor{oursblue}\textbf{1.41}
            & \cellcolor{oursblue}\textbf{278.2}
            & \cellcolor{oursblue}0.797
            & \cellcolor{oursblue}\textbf{0.627} \\
            \bottomrule
        \end{tabular}
        }
    \end{minipage}\hfill
    \begin{minipage}[c]{0.58\linewidth}
        \centering
        \includegraphics[width=\linewidth,trim={0pt 3pt 0pt 0pt},clip]{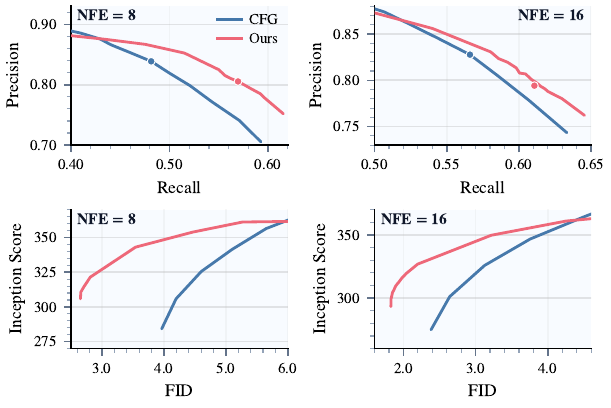}
    \end{minipage}

    \vspace{-0.5em}
    \caption{
        \textbf{ImageNet-256 comparison.}
        Left: best-FID operating points from the parameter sweeps.
        Right: Pareto frontiers; markers denote the corresponding best-FID points.
    }
    \label{fig:imagenet_cfg_comparison}

    \vspace{-1.5em}
\end{figure*}

\vspace{-0.5em}
\subsection{Anti-memorization}
\label{sec:anti_memorization}

We next apply Signed RF to anti-memorization by explicitly modeling the training set as a negative distribution, with the goal of reducing training-data replication while preserving generation quality.
\begin{figure}[b]
    \vspace{-1em}
    \centering
    \begin{tikzpicture}
        \node[anchor=west, inner sep=0] (img) at (0,0)
        {\includegraphics[width=0.97\linewidth]{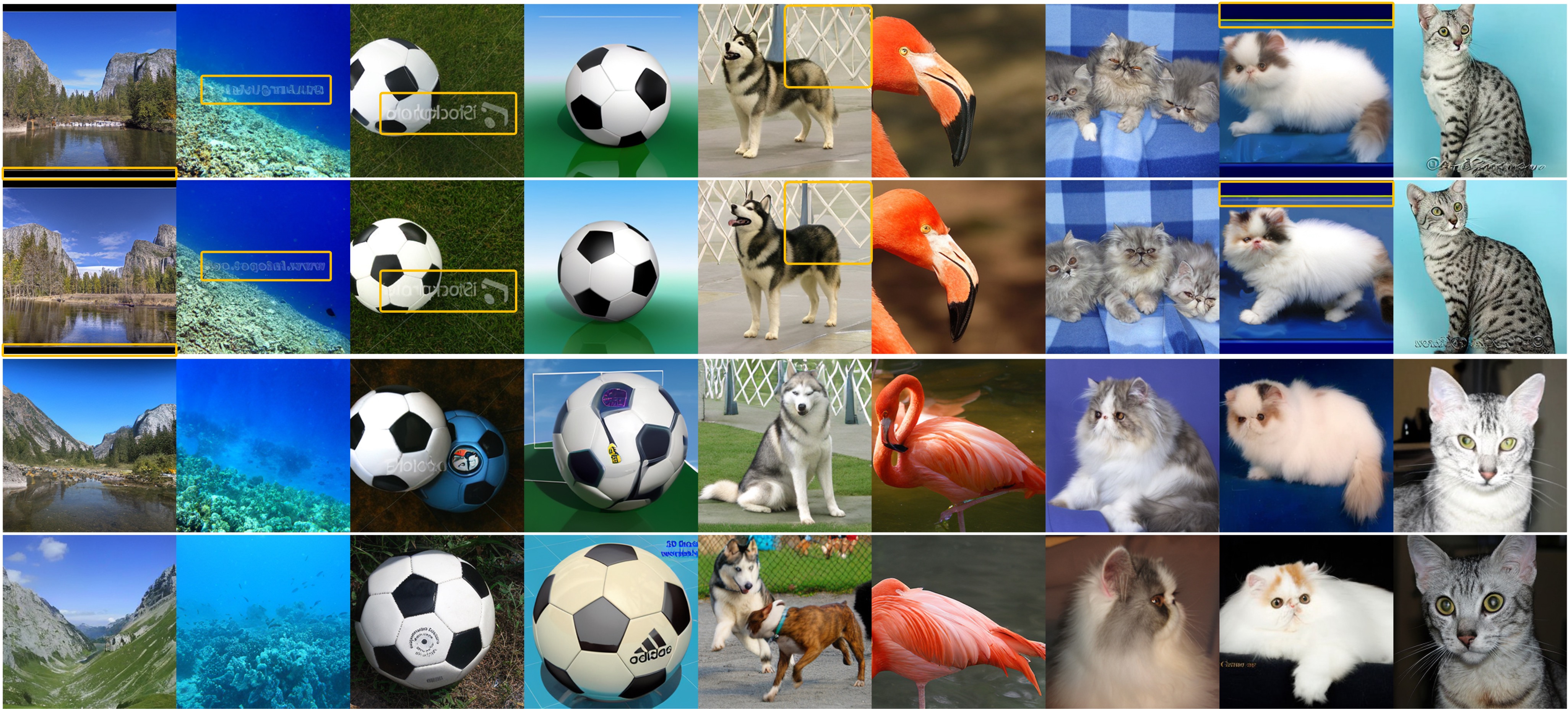}};

        \node[anchor=east] at ($ (img.south west)!0.875!(img.north west) + (0.5mm,0) $)
        {\rotatebox{90}{\scriptsize Base Model}};
        \node[anchor=east] at ($ (img.south west)!0.625!(img.north west) + (0.5mm,0) $)
        {\rotatebox{90}{\scriptsize Top Match}};
        \node[anchor=east] at ($ (img.south west)!0.375!(img.north west) + (0.5mm,0) $)
        {\rotatebox{90}{\scriptsize Ours}};
        \node[anchor=east] at ($ (img.south west)!0.125!(img.north west) + (0.5mm,0) $)
        {\rotatebox{90}{\scriptsize Top Match}};
    \end{tikzpicture}

    \caption{\textbf{Memorization examples on ImageNet.}
    Each column uses the same initial seed, with the second and fourth rows showing the nearest training images. The base model closely reproduces training examples, including dataset-specific artifacts such as borders, watermarks, and backgrounds. In contrast, Data Repulsive Flow suppresses these artifacts while preserving class semantics, producing samples that are visibly less similar to their nearest training images.}
    \label{fig:imagenet_memorization_examples}
    \vspace{-1.5em}
\end{figure}

\vspace{-0.5em}
\paragraph{Analytic flow.}
To explicitly model memorization, we take the negative target to be the
empirical training distribution
\(
    \pi_1^-
    =
    \frac{1}{N}\sum_{i=1}^N\delta_{\bx^{(i)}},
\)
where \(\mathcal D\!=\!\{\bx^{(i)}\}_{i=1}^N\) denotes the training set.
The RF transporting the Gaussian source
\(\pi_0\!=\!\mathcal N(\boldsymbol 0,\mI)\) to \(\pi_1^-\) admits a
closed-form marginal and velocity field:
\vspace{-1.0em}
\begin{equation}
    \pi_t^-(\bx)
    =
    \frac{1}{N}\sum_{i=1}^N
    \mathcal N\!\left(\bx;t\bx^{(i)},(1-t)^2\mI\right),
    \qquad
    v_t^-(\bx)
    =
    \sum_{i=1}^N
        w_i(\bx,t)\,
        \frac{\bx^{(i)}-\bx}{1-t},
    \label{eq:analytic_negative_flow}
\end{equation}
where \(w_i(\bx,t)\) are the normalized Gaussian kernel weights; see
Sec.~\ref{sec:analytic_negative_flow_details} for details. This defines a
training-free negative flow whose terminal distribution is exactly the
empirical training distribution. As \(t\to1\), the Gaussian components in
\(\pi_t^-\) collapse onto individual training examples, and trajectories under
this flow terminate at training samples~\citep{liu2022flow,liu2025let}.
Moreover, \(v_t^-\) contains the singular factor \(1/(1-t)\), inducing strong
attraction toward nearby training examples close to terminal time. Constant negative guidance is therefore poorly conditioned, as it directly inherits
this terminal singularity.

\paragraph{Data Repulsive Flow.}
The analytic flow in Eq.~\ref{eq:analytic_negative_flow} provides a natural
negative branch for repelling generated trajectories from the training set.
For each class \(c\), we define \emph{Data Repulsive Flow} using the pretrained
class-conditional flow as \(v_t^+(\bx,c)\) and the class-specific empirical
analytic flow as \(v_t^-(\bx,c)\). We track the density ratio online using
Eq.~\ref{eq:online_ratio_tracking}. Specifically, the log-ratio is initialized as
\(u_0=0\) and updated at each Euler step using \(v_t^+\), \(v_t^-\), the RF
score function~\citep{liu2025let,hu2025improving}, and a Hutchinson estimate of
\(\nabla\!\cdot(v_t^+-v_t^-)\)~\citep{hutchinson1989stochastic, liu2025let}.

We evaluate memorization using a stress test. Since memorization events are rare under random sampling, aggregate statistics may be dominated by low-risk seeds and obscure copying behavior. For class \(c\), we first generate \(50\mathrm{K}\) samples from the base model with \(\omega=1.0\), rank their initial seeds by the nearest-neighbor SSCD \(L_2\) distance of the resulting samples to the training set~\citep{pizzi2022self}, and retain the \(600\) highest-risk seeds. All methods are then evaluated using this same seed set for a controlled comparison. We additionally report standard generation metrics on \(50\mathrm{K}\) randomly sampled seeds.

Fig.~\ref{fig:imagenet_memorization_examples} shows representative results on the high-risk
seeds. The base model generates images highly similar to their nearest training
examples, in some cases reproducing dataset-specific artifacts such as borders
and watermarks. Data Repulsive Flow suppresses these artifacts, preserves class
semantics, and produces samples that are visibly farther from their
nearest training images. Quantitatively, the SSCD \(L_2\) distributions in
Fig.~\ref{fig:imagenet_memorization_metrics} shift toward larger distances. Compared with SPELL~\citep{kirchhof2025shielded}, Data Repulsive Flow achieves
a better protection--quality trade-off. At \(\alpha=1.0\), it nearly matches
the \(P_{05}\) protection level of SPELL-50 while retaining substantially
better generation quality, with FID \(2.03\) versus \(7.41\). Increasing
\(\alpha\) to \(7.5\) further strengthens protection with only mild FID
degradation and nearly unchanged IS. Additional per-class sweeps in
Sec.~\ref{sec:analytic_negative_flow_details} show that protection improves
smoothly with \(\alpha\); the same trend holds under the latent \(L_2\) protection metric, which is
not used to select the high-risk seeds.

\begin{figure}[t]
    \vspace{-1.5em}
    \centering
    \newsavebox{\leftimgbox}
    \sbox{\leftimgbox}{%
        \includegraphics[width=0.60\linewidth,trim={0pt 3pt 0pt 0pt},clip]{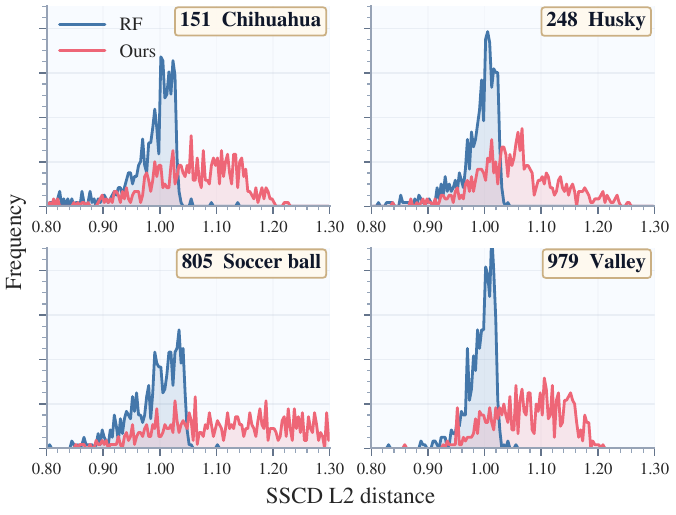}%
    }
    \newlength{\leftimgh}
    \setlength{\leftimgh}{\dimexpr\ht\leftimgbox+\dp\leftimgbox\relax}

    \hspace*{-0.05\linewidth}
    \begin{minipage}[c]{0.60\linewidth}
        \centering
        \usebox{\leftimgbox}
    \end{minipage}%
    \hspace*{0.02\linewidth}
    \begin{minipage}[c]{0.38\linewidth}
        \centering
        \small
        \renewcommand{\arraystretch}{1.12}
        \setlength{\tabcolsep}{0pt}

        \vbox to \leftimgh{%

            {
                \fontsize{8.5}{10.0}\selectfont
                \renewcommand{\arraystretch}{1.05}
                \vspace{-0.2em}
                \begin{tabularx}{\linewidth}{@{}@{\extracolsep{\fill}}ccccc@{}}
                    \toprule
                    Method & FID $\downarrow$ & IS $\uparrow$ & Prec.$\uparrow$ & Rec.$\uparrow$ \\
                    \midrule
                    Base  & \underline{2.07} & 242.4 & 0.790 & \textbf{0.627} \\
                    $\alpha\!=\!1.0$ & \textbf{2.03} & \underline{246.0} & 0.796 & \underline{0.624} \\
                    $\alpha\!=\!3.0$ & 2.11 & \textbf{246.2} & \textbf{0.799} & 0.614 \\
                    $\alpha\!=\!7.5$ & 2.47 & 245.9 & \underline{0.797} & 0.600 \\
                    SPELL-$45$    & 2.59 & 233.2 & 0.771 & 0.614 \\
                    SPELL-$50$    & 7.41 & 189.7 & 0.680 & 0.563 \\
                    \bottomrule
                \end{tabularx}

                \vspace{0.25em}
                \vfil
                \vspace{0.25em}

                \begin{tabularx}{\linewidth}{@{}@{\extracolsep{\fill}}ccccc@{}}
                    \toprule
                    Method & $P_{05}\uparrow$ & $P_{10}\uparrow$ & $P_{25}\uparrow$ & Mean \\
                    \midrule
                    Base  & 0.921 & 0.944 & 0.982 & 0.993 \\
                    $\alpha\!=\!1.0$  & 0.939 & 0.962 & 0.992 & 1.024 \\
                    $\alpha\!=\!3.0$  & \underline{0.956} & \underline{0.975} & \underline{1.005} & \underline{1.045} \\
                    $\alpha\!=\!7.5$  & \textbf{0.964} & \textbf{0.987} & \textbf{1.020} & \textbf{1.066} \\
                    SPELL-$45$ & 0.919 & 0.943 & 0.981 & 0.994 \\
                    SPELL-$50$ & 0.940 & 0.963 & 1.003 &   1.049 \\
                    \bottomrule
                \end{tabularx}
            }

            \vspace{0.5em}
        }
    \end{minipage}

    \vspace{-0.3em}
    \caption{\textbf{Quantitative comparison for anti-memorization on ImageNet.}
        Left: SSCD \(L_2\) nearest-neighbor histograms for four classes on the same high-risk seeds.
        Upper right: generation quality under the standard \(50\mathrm{K}\)-sample evaluation.
        Lower right: memorization statistics for class 248, summarized by the \(5\%\), \(10\%\), and \(25\%\) quantiles and mean SSCD \(L_2\) distance (higher is better).}
    \label{fig:imagenet_memorization_metrics}
\end{figure}

\subsection{Concept Suppression}

\paragraph{Nudity prevention.}
We next apply Signed RF to nudity prevention. Starting from a pretrained text-to-image model\footnote{\texttt{stabilityai/stable-diffusion-3.5-medium}~\citep{esser2024scaling}.}, we train two LoRA branches conditioned on the same unsafe prompt: a positive branch \(v^+\) that models safe outputs and a negative branch \(v^-\) that models the corresponding unsafe outputs. The training data are constructed from paired prompts in ViSU-Text~\citep{poppi2024removing}. Although fine-tuning \(v^+\) toward safe outputs reduces unsafe generation, residual unsafe modes inherited from the base model may remain. Signed RF explicitly models these modes through \(v^-\) and suppresses them during sampling.

We evaluate on Ring-A-Bell~\citep{tsai2023ring}, a benchmark of adversarial prompts designed to elicit nude content. We compare with SAFREE~\citep{yoon2025safree} and Safe Denoiser~\citep{kim2025training}, using the same backbone fine-tuned on safe outputs for all methods. We report attack success rate (ASR) and toxicity rate (TR), together with CLIP and aesthetic scores (AES) on COCO prompts to measure generation quality. As shown in Fig.~\ref{fig:nudity_prevention}, Signed RF reduces ASR from \(15.19\%\) to \(6.33\%\) and TR from \(0.180\) to \(0.125\), while maintaining comparable CLIP and AES scores.

\begin{figure}[t]
    \centering
    \setlength{\tabcolsep}{3.5pt}
    \renewcommand{\arraystretch}{1.02}
    \small

    \begin{minipage}[c]{0.40\linewidth}
        \centering
        {\fontsize{8}{10}\selectfont
        \setlength{\tabcolsep}{1.5pt}
        \begin{tabular}{c cc cc}
            \toprule
            \multirow{2}{*}{Method} & \multicolumn{2}{c}{Ring-A-Bell} & \multicolumn{2}{c}{COCO-30K} \\
            \cmidrule(lr){2-3} \cmidrule(lr){4-5}
            & ASR (\%) $\downarrow$ & TR $\downarrow$ & CLIP $\uparrow$ & AES $\uparrow$ \\
            \midrule
            SD\,3.5-M        & 15.19 & 0.180 & 31.85 & 5.373 \\
            SAFREE           & 11.39 & 0.146 & 31.44 & 5.363 \\
            Safe Denoiser    & 12.66 & 0.157 & 31.44 & 5.373 \\
            \rowcolor{oursblue}
            \textbf{Ours}    & \textbf{6.33} & \textbf{0.125} & 31.57 & 5.371 \\
            \bottomrule
        \end{tabular}
        }
    \end{minipage}\hfill
    \begin{minipage}[c]{0.56\linewidth}
        \centering

        \begin{minipage}[t]{0.245\linewidth}
            \centering
            \includegraphics[width=\linewidth]{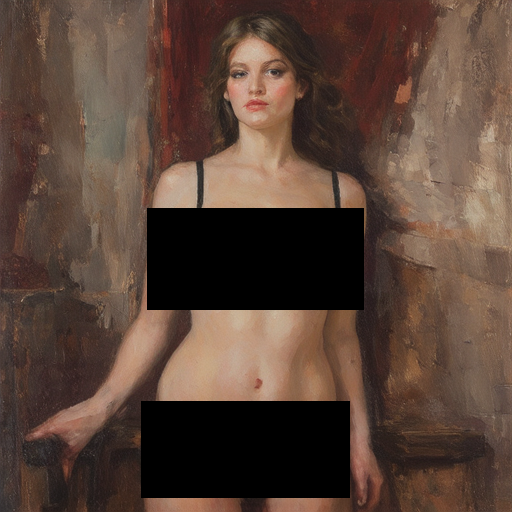}\\[2pt]
            {\footnotesize (a) Baseline}
        \end{minipage}\hspace{0.000\linewidth}%
        \begin{minipage}[t]{0.245\linewidth}
            \centering
            \includegraphics[width=\linewidth]{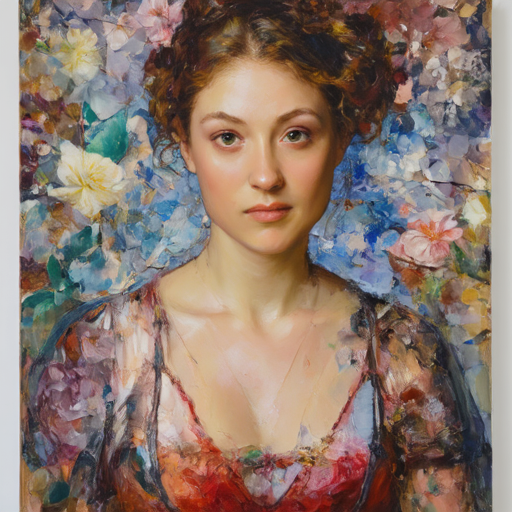}\\[2pt]
            {\footnotesize (b) Ours}
        \end{minipage}\hspace{0.01\linewidth}%
        \begin{minipage}[t]{0.245\linewidth}
            \centering
            \includegraphics[width=\linewidth]{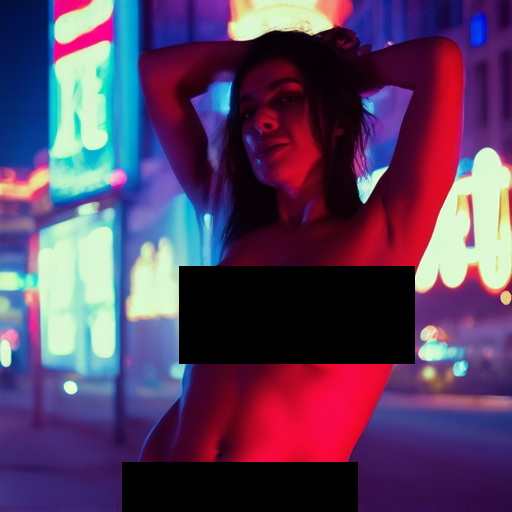}\\[2pt]
            {\footnotesize (c) Baseline}
        \end{minipage}\hspace{0.000\linewidth}%
        \begin{minipage}[t]{0.245\linewidth}
            \centering
            \includegraphics[width=\linewidth]{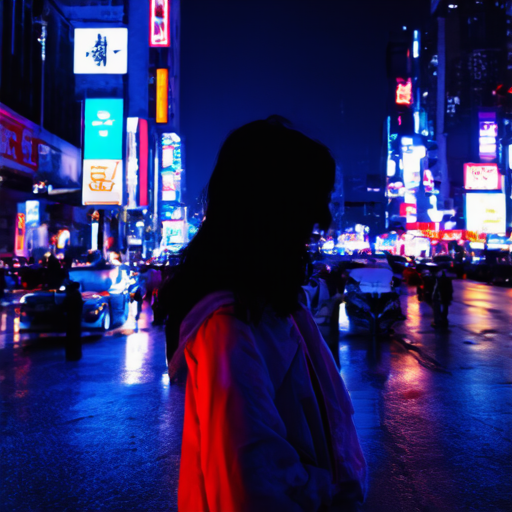}\\[2pt]
            {\footnotesize (d) Ours}
        \end{minipage}

    \end{minipage}

    \vspace{-0.4em}
    \caption{\textbf{Quantitative comparison for nudity prevention on SD\,3.5-Medium.}
        Left: quantitative results on Ring-A-Bell and COCO-30K.
        Signed RF reduces unsafe generation while maintaining comparable image quality.
        Right: qualitative comparisons.}
    \label{fig:nudity_prevention}
    \vspace{-1.5em}

\end{figure}

\paragraph{Concept suppression.}
We further study Signed RF in qualitative concept-suppression settings. Given a pretrained text-to-image model~\citep{cai2025z}, we obtain \(v^+\) and \(v^-\) by conditioning the same model on positive and negative prompts, respectively, and track the density ratio online using Eq.~\ref{eq:online_ratio_tracking}.

Results are shown in Fig.~\ref{fig:signed_rf_teaser} and Fig.~\ref{fig:concept_suppression}. For IP protection, the positive prompt describes the desired visual attributes without naming the protected character, whereas the negative prompt explicitly specifies the identity to suppress. Signed RF removes identity-specific features while preserving the requested appearance, pose, and composition. For example, it suppresses the recognizable character identity in Fig.~\ref{fig:signed_rf_teaser}(d) while retaining the round black ears, red shorts, yellow shoes, and white gloves. Similarly, in Fig.~\ref{fig:concept_suppression}(e), it removes Spider-Man-specific elements, such as the spider emblem, while preserving the pose, colors, and detailed suit structure. Signed RF also suppresses broader semantic attributes: it replaces a statue-like interpretation with a living angel in Fig.~\ref{fig:concept_suppression}(b) and removes red tones while preserving the landscape structure in Fig.~\ref{fig:concept_suppression}(d). Full prompts are provided in Sec.~\ref{sec:concept_suppression_prompts}.

\begin{figure}[H]
    \centering
    \includegraphics[width=0.8\linewidth]{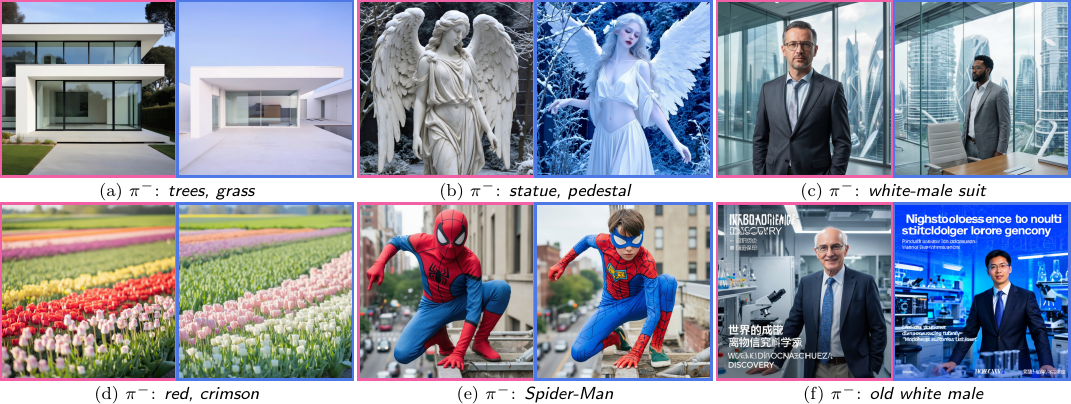}
    \caption{Qualitative comparison of concept suppression on Z-Image.}
    \label{fig:concept_suppression}
    \vspace{-0.8em}
\end{figure}

\section{Conclusion}

We introduced Signed Rectified Flow, an extension of Rectified Flow that
incorporates explicit negative information through the signed target
\((1+\alpha)\pi^+-\alpha\pi^-\). We showed that the source-initialized flow
remains in a dynamically reachable subset of the positive region, where its
sampling density matches the signed marginal, while the zero set acts as a
barrier separating the realized trajectories from negative regions. This
yields a state-dependent guidance rule whose strength is determined by the
density ratio between the positive and negative distributions.

Signed RF can be implemented using either a learned density-ratio classifier or
online ratio tracking. Across diverse settings, it resolves the missing
negative data problem in trajectory planning, improves the
fidelity--diversity trade-off in ImageNet class-conditional generation, reduces
training-data replication, and suppresses unsafe or undesired concepts while
preserving generation quality. Overall, Signed RF provides a principled and
practical framework for incorporating explicit negative distributions into
generative flows.

\bibliographystyle{unsrt}
\bibliography{main}

\appendix

\section{Related Work}
\label{app:related_work}

\paragraph{Inference-time guidance and its target distribution.}
Classifier guidance augments the diffusion score with the gradient of an
external classifier, whereas classifier-free guidance (CFG) forms an affine
combination of conditional and unconditional model
predictions~\citep{dhariwal2021diffusion,ho2022classifier}. Both use a user-specified
guidance scale. Recent theory shows that CFG should not generally be
interpreted as exact sampling from a simple tilt of the clean-data
distribution: its effect can depend on the sampler, admit a
predictor--corrector interpretation, or require additional correction terms to
realize a prescribed tilted
target~\citep{bradley2024classifier,moufad2025conditional}. Other work changes the
reference branch itself, for example by guiding with a weaker version of the
same model~\citep{karras2024guiding}.

Signed RF asks a complementary question: given distributions \(\pi^+\) and
\(\pi^-\), what RF dynamics are induced by the signed marginal
\(
    \pi_t^{\sgn}
    =
    (1+\alpha)\pi_t^+
    -
    \alpha\pi_t^-?
\)
Applying the RF construction to this signed marginal yields a guidance-like
velocity whose strength is determined locally by the density ratio between the
negative and positive branches. Once \(\alpha\) is fixed, the effective local
scale is therefore induced by the signed target rather than prescribed as a
fixed schedule. Once the two branch RFs and \(\alpha\) are fixed, they
determine the signed flux and velocity away from the zero set.
Under the common-Gaussian-source, regularity, nondegeneracy, and ODE
well-posedness assumptions in Appendix~\ref{app:theory}, the zero set forms a
dynamical boundary underlying the nonpenetration, reachable-region, and
ghost-region behavior analyzed in
Sec.~\ref{sec:signed_rf_sampling}.

\paragraph{Adaptive and geometry-aware guidance.}
Several methods refine constant-scale CFG by changing when or how guidance is
applied. Guidance Interval restricts it to a selected range of
timesteps~\citep{kynkaanniemi2024applying}. CFG++ modifies the update to better respect
the data manifold and reduce mode collapse~\citep{chungcfg++}. Adaptive
Projected Guidance decomposes the update into components parallel and
orthogonal to the conditional prediction and down-weights the parallel
component associated with oversaturation~\citep{sadat2025eliminating}.
CFG-Zero suppresses early-step guidance and adds a scalar correction for
inaccurate velocity estimates~\citep{fan2025cfg}. Momentum Guidance constructs
a trajectory-dependent signal using an exponential moving average of past
velocities~\citep{liao2026momentum}, while Angle Domain Guidance emphasizes
angular rather than radial changes to avoid latent-norm
amplification~\citep{jin2025angle}.

These methods adapt the schedule, geometry, normalization, or temporal
aggregation of a guidance update. Signed RF is adaptive for a different
reason: its scale is derived from
\(
    r_t(\bx)=\pi_t^-(\bx)/\pi_t^+(\bx)
\),
and therefore grows where the current state is locally more compatible with
the negative branch. Moreover,
\(
    (1+\alpha)-\alpha r_t(\bx)
\)
is the signed-density factor, so its zero set is a structural boundary rather
than only a numerical singularity. Learned ratio estimates and clipping
approximate this idealized field in practice; under the stated assumptions, the
strict nonpenetration result applies to the exact Signed RF dynamics.

\paragraph{Dynamic negative guidance and negative prompting.}
Dynamic Negative Guidance (DNG) and Training-Free Safe Denoisers are closely
related adaptive negative-guidance
methods~\citep{koulischer2024dynamic,kim2025training}. DNG derives a state-dependent
suppression rule from posterior probabilities for class removal. Safe
Denoisers derive an adaptive denoiser from a binary safe--unsafe partition.
Signed RF instead permits a general, potentially overlapping pair
\((\pi^+,\pi^-)\), constructs the signed target
\((1+\alpha)\pi^+-\alpha\pi^-\), and analyzes the resulting zero boundary,
reachable region, and ghost region.

Negative prompting and semantic-guidance variants also use subtractive
directions during sampling, while broader compositional-generation methods
combine multiple energy or score
terms~\citep{du2020compositional,liu2022compositional,brack2023sega}. Signed RF
provides a target-level construction for positive--negative distribution pairs
and derives the associated dynamics.

\paragraph{Concept erasure, safety, and controllable suppression.}
Concept-erasure methods such as ESD, UCE, MACE, and adversarially robust
unlearning alter model weights or attention
parameters~\citep{gandikota2023erasing,gandikota2024unified,lu2024mace,zhang2024defensive}.
SAeUron instead trains a sparse autoencoder and ablates concept-associated
activations at inference time~\citep{cywinski2025saeuron}. Other inference-time
safety methods retain the base generator and intervene through safety guidance,
prompt or representation filtering, or adaptive safe
denoisers~\citep{schramowski2023safe,yoon2025safree,kim2025training}.

Signed RF belongs to the inference-time guidance family. The parameter
\(\alpha\) controls the overall degree of suppression, while \(r_t(\bx)\)
determines where strong repulsion is needed. Guidance is consequently weak
where the negative-to-positive density ratio is small and grows as that ratio
increases.

\paragraph{Memorization and anti-memorization.}
Diffusion models can reproduce or closely imitate training examples, and such
replication can be detected with nearest-neighbor and copy-detection metrics
such as
SSCD~\citep{carlini2023extracting,somepalli2023diffusion,somepalli2023understanding,wen2024detecting,pizzi2022self}.
Training-time mitigations include DP-SGD and learning from corrupted
observations~\citep{dockhorn2023differentially,daras2023ambient}. At inference
time, guidance can be deferred near attraction basins or samples can be
repelled from protected reference
sets~\citep{jain2025classifier,kirchhof2025shielded}. CADS instead anneals the
condition to improve sample diversity; it is relevant to guidance-induced
collapse but is not a direct anti-memorization method~\citep{sadat2024cads}.

In our anti-memorization instantiation, we take the negative branch to be the
empirical training distribution. For a finite dataset, the Gaussian-source RF marginal and
velocity toward the empirical measure admit closed forms, yielding an analytic
negative flow whose trajectories are attracted toward training examples.
Signed RF subtracts this branch at the measure level rather than applying
pairwise repulsion to individual references. Near terminal time, the
empirical-flow velocity contains the singular factor \(1/(1-t)\); Signed RF
modulates its strength using the local density ratio instead of a fixed scale.

\paragraph{Density ratios, discriminator guidance, and preference learning.}
Estimating
\(
    r_t(\bx)=\pi_t^-(\bx)/\pi_t^+(\bx)
\)
connects Signed RF to density-ratio estimation and noise-contrastive
estimation~\citep{sugiyama2012density,gutmann2012noise}. With balanced positive
and negative samples, a Bayes-optimal binary classifier's odds recover the
ratio. DRE-\(\infty\)
estimates an endpoint density ratio by integrating learned time scores across
a continuum of bridge distributions~\citep{choi2022density};
this is related to, but distinct from, tracking a ratio along each sampling
trajectory. Discriminator Guidance similarly uses a classifier-derived signal
to refine generation~\citep{kim2023refining}, while continuous-normalizing-flow
likelihood methods supply the divergence identities used for trajectory-wise
log-ratio
tracking~\citep{chen2018neural,grathwohl2019ffjord,hutchinson1989stochastic}.

DPO uses relative policy log-likelihoods, while Diffusion-DPO adapts the
objective using an ELBO-derived surrogate for diffusion
likelihoods~\citep{rafailov2023direct,wallace2024diffusion}.
Diffusion-NPO fine-tunes a diffusion model using negative
preferences~\citep{wang2025diffusion}, whereas DDPO optimizes expected reward with policy
gradients~\citep{black2023training}. These are training-time alignment methods.
Signed RF instead uses a positive--negative density ratio at sampling time to
set the local guidance strength and determine the valid sampling law induced by
the signed target.

\section{Proofs and Additional Derivations}
\label{app:theory}

This appendix provides the proofs and derivations deferred from
Sec.~\ref{sec:signed_rf_sampling} and
Sec.~\ref{sec:signed_rf_practice}. Let \(\pi_1^+\) and \(\pi_1^-\) be two
target distributions, and let \((\pi_t^+,v_t^+)\) and
\((\pi_t^-,v_t^-)\) denote the corresponding RF marginals and velocity fields
induced from the source \(\pi_0\). For
\(\alpha>0\), define the signed marginal
\[
    \pi_t^{\sgn}(\bx)
    =
    (1+\alpha)\,\pi_t^+(\bx)
    -
    \alpha\,\pi_t^-(\bx).
\]
We partition the state space according to the sign of
\(\pi_t^{\sgn}\):
\[
    \Omega_t^+
    \coloneqq
    \{\bx:\pi_t^{\sgn}(\bx)>0\},
    \qquad
    \Omega_t^0
    \coloneqq
    \{\bx:\pi_t^{\sgn}(\bx)=0\},
    \qquad
    \Omega_t^-
    \coloneqq
    \{\bx:\pi_t^{\sgn}(\bx)<0\}.
\]
The Signed RF velocity field is well defined on
\(\Omega_t^+\cup\Omega_t^-\) and becomes singular on the moving zero set
\(\Omega_t^0\). Under the regularity assumptions stated below, it is locally
smooth away from \(\Omega_t^0\), so standard ODE theory applies up to the first
time a trajectory approaches this zero set. Since both RF branches share the
same source distribution,
\(
    \pi_0^{\sgn}
    =
    (1+\alpha)\pi_0-\alpha\pi_0
    =
    \pi_0>0,
\)
and hence all source-initialized trajectories begin in \(\Omega_0^+\).

\paragraph{Derivation of the Signed RF velocity.}
For each branch, \(\pi_t^\pm(\bx)\) denotes the RF marginal density and
\(\pi_t^\pm(\bx)v_t^\pm(\bx)\) denotes the corresponding probability flux.
Both quantities depend linearly on the terminal distribution. Therefore, for
the signed target
\(
    \pi_1^{\sgn}
    =
    (1+\alpha)\pi_1^+
    -
    \alpha\pi_1^-,
\)
the induced signed marginal and flux are
\begin{align}
    \pi_t^{\sgn}(\bx)
    &=
    (1+\alpha)\,\pi_t^+(\bx)
    -
    \alpha\,\pi_t^-(\bx),
    \label{eq:signed_marginal_restate}
    \\
    \pi_t^{\sgn}(\bx)v_t^{\sgnrf}(\bx)
    &=
    (1+\alpha)\,\pi_t^+(\bx)v_t^+(\bx)
    -
    \alpha\,\pi_t^-(\bx)v_t^-(\bx).
    \label{eq:signed_flux_restate}
\end{align}
As in standard RF, the velocity field is obtained by dividing the flux by the
marginal density. Thus, wherever \(\pi_t^{\sgn}(\bx)\neq0\),
\begin{equation}
    v_t^{\sgnrf}(\bx)
    =
    \frac{
        (1+\alpha)\,\pi_t^+(\bx)v_t^+(\bx)
        -
        \alpha\,\pi_t^-(\bx)v_t^-(\bx)
    }{
        (1+\alpha)\,\pi_t^+(\bx)
        -
        \alpha\,\pi_t^-(\bx)
    }.
    \label{eq:signed_rf_velocity_restate}
\end{equation}
This recovers Eq.~\ref{eq:signed_rf_velocity} in the main text.

To obtain the guidance form, define
\(
    r_t(\bx)\coloneqq\pi_t^-(\bx)/\pi_t^+(\bx)
\)
and
\(
    \Delta v_t(\bx)\coloneqq v_t^+(\bx)-v_t^-(\bx)
\).
Dividing the numerator and denominator of
Eq.~\ref{eq:signed_rf_velocity_restate} by \(\pi_t^+(\bx)\) gives
\begin{align*}
    v_t^{\sgnrf}(\bx)
    &=
    \frac{
        (1+\alpha)v_t^+(\bx)
        -
        \alpha\,r_t(\bx)v_t^-(\bx)
    }{
        (1+\alpha)-\alpha\,r_t(\bx)
    }
    \notag
    \\
    &=
    v_t^+(\bx)
    +
    \frac{
        \alpha\,r_t(\bx)
    }{
        (1+\alpha)-\alpha\,r_t(\bx)
    }
    \Delta v_t(\bx).
\end{align*}
Hence,
\begin{equation}
    v_t^{\sgnrf}(\bx)
    =
    v_t^+(\bx)
    +
    \lambda_t^\alpha(\bx)\Delta v_t(\bx),
    \qquad
    \lambda_t^\alpha(\bx)
    =
    \frac{
        \alpha\,r_t(\bx)
    }{
        (1+\alpha)-\alpha\,r_t(\bx)
    }.
    \label{eq:signed_rf_guidance_restate}
\end{equation}
The denominator satisfies
\[
    (1+\alpha)-\alpha\,r_t(\bx)
    =
    \frac{\pi_t^{\sgn}(\bx)}{\pi_t^+(\bx)}.
\]
Therefore, assuming \(\pi_t^+(\bx)>0\), it vanishes precisely on the signed
zero set \(\Omega_t^0\), where the Signed RF velocity becomes singular.

\vspace{1.5em}

\begin{proposition}[TV-optimal probability approximation]
    \label{prop:tv_optimal_probability_approximation}
    Fix \(t<1\), and define the total negative mass of the signed marginal by
    \[
        M_t^-
        \coloneqq
        -
        \int_{\Omega_t^-}
        \pi_t^{\sgn}(\bx)\,\mathrm{d}\bx.
    \]
    Then, among all probability densities \(q\),
    \[
        \inf_{\substack{q(\bx)\geq 0\\
        \int q(\bx)\,\mathrm{d}\bx=1}}
        \frac{1}{2}
        \int_{\R^d}
        \left|
            q(\bx)-\pi_t^{\sgn}(\bx)
        \right|
        \mathrm{d}\bx
        =
        M_t^-.
    \]
    Moreover, \(q\) is optimal if and only if
    \[
        q(\bx)=0
        \quad\text{on }\Omega_t^0\cup\Omega_t^-,
        \qquad
        0\leq q(\bx)\leq\pi_t^{\sgn}(\bx)
        \quad\text{on }\Omega_t^+,
    \]
    up to sets of measure zero.

    In particular, if \(S_t\subseteq\Omega_t^+\) satisfies
    \[
        \int_{S_t}
        \pi_t^{\sgn}(\bx)\,\mathrm{d}\bx
        =
        1,
    \]
    then
    \[
        q_{S_t}(\bx)
        =
        \pi_t^{\sgn}(\bx)\mathbf{1}\{\bx\in S_t\}
    \]
    is TV-optimal. By Theorem~\ref{thm:signed_rf_sampling}, the Signed RF sampling
    law corresponds to \(S_t=\Omega_t^r\), and is therefore one such optimal
    approximation.
\end{proposition}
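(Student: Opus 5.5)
The plan is a direct pointwise computation. Write \(f=\pi_t^{\sgn}\) and record two facts. First, \(\int f=1\): both \(\pi_t^+\) and \(\pi_t^-\) are probability densities, and \((1+\alpha)-\alpha=1\). Second, \(f\in L^1\), which we need so that \(M_t^-<\infty\). Splitting the total mass by sign gives
\[
\int_{\Omega_t^+}f=1+M_t^- .
\]

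\emph{Lower bound.} Fix any probability density \(q\) and split \(\R^d=\Omega_t^+\cup(\Omega_t^0\cup\Omega_t^-)\).
\begin{itemize}
\item On \(\Omega_t^0\cup\Omega_t^-\) we have \(f\le 0\le q\), so \(|q-f|=q-f\). Integrating gives \(m+M_t^-\), where \(m\coloneqq\int_{\Omega_t^0\cup\Omega_t^-}q\) is the mass \(q\) places off the positive region.
\item On \(\Omega_t^+\) we use \(|q-f|\ge f-q\). Integrating gives at least \((1+M_t^-)-(1-m)=M_t^-+m\).
\end{itemize}
Adding the two pieces, \(\int|q-f|\ge 2M_t^-+2m\ge 2M_t^-\). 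Hence \(\tfrac12\int|q-f|\ge M_t^-\).

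\emph{Equality characterization.} Equality holds in this chain exactly when two conditions hold. The first is \(m=0\), i.e.\ \(q=0\) a.e.\ on \(\Omega_t^0\cup\Omega_t^-\). The second is \(|q-f|=f-q\) a.e.\ on \(\Omega_t^+\), i.e.\ \(q\le f\) there; together with \(q\ge0\) this gives \(0\le q\le f\) on \(\Omega_t^+\). Conversely, suppose \(q\) satisfies both conditions. Its TV distance to \(f\) is
\[
\tfrac12\Big[\int_{\Omega_t^+}(f-q)+M_t^-\Big]=\tfrac12\big[(1+M_t^-)-1+M_t^-\big]=M_t^-.
\]
So every such \(q\) attains the bound.

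\emph{The infimum is attained.} The value \(M_t^-\) is actually achieved, so the infimum is a minimum. To see this, take \(S=\{f>c\}\) with \(c\ge0\) chosen so that \(\int_S f=1\). Such a \(c\) exists because \(c\mapsto\int_{\{f>c\}}f\) is continuous from \(1+M_t^-\ge 1\) down to \(0\). If there are atoms (level sets of positive measure), we instead take a measurable subset of a level set. The resulting \(q=f\mathbf 1_S\) satisfies \(0\le q\le f\) on \(\Omega_t^+\) and vanishes elsewhere.

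\emph{The ``in particular'' clause.} For \(S_t\subseteq\Omega_t^+\) with \(\int_{S_t}f=1\), the function \(q_{S_t}=f\mathbf 1_{S_t}\) is nonnegative, has unit mass, and satisfies the characterization, so it is optimal. The final sentence then follows from Theorem~\ref{thm:signed_rf_sampling}, which gives \(\pi_t^{\sgnrf}=f\mathbf 1_{\Omega_t^r}\) with \(\Omega_t^r\subseteq\Omega_t^+\) and unit mass.

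There is no real obstacle here. The only points needing care are measure-theoretic: handling \(\Omega_t^0\) correctly (\(f=0\) there, so it contributes \(q\) to the TV integral), and working with integrability and a.e.\ statements throughout.
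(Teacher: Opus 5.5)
Your proof is correct and takes essentially the paper's route: split into $\Omega_t^+$ and $\Omega_t^0\cup\Omega_t^-$, use $\int_{\Omega_t^+}\pi_t^{\sgn}=1+M_t^-$, and read off the equality conditions. The paper writes this as the exact identity $\tfrac12\int|q-\pi_t^{\sgn}|=M_t^-+A+E$, where $E$ is the excess of $q$ over $\pi_t^{\sgn}$ on $\Omega_t^+$, which is equivalent to your one-sided bound; your superlevel-set argument that the bound is attained is a small addition the paper's proof does not spell out.
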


\begin{proof}
    Since \(\pi_t^{\sgn}\) has total mass one, its positive region contains
    total mass
    \[
        \int_{\Omega_t^+}
        \pi_t^{\sgn}(\bx)\,\mathrm{d}\bx
        =
        1+M_t^-.
    \]
    Now let \(q\) be any probability density. Define
    \[
        A
        \coloneqq
        \int_{\Omega_t^0\cup\Omega_t^-}
        q(\bx)\,\mathrm{d}\bx
    \]
    as the probability mass that \(q\) assigns outside the positive region,
    and define
    \[
        E
        \coloneqq
        \int_{\Omega_t^+}
        \bigl(q(\bx)-\pi_t^{\sgn}(\bx)\bigr)_+
        \,\mathrm{d}\bx
    \]
    as the amount by which \(q\) exceeds the signed density inside the positive
    region.

    Because \(q\) has mass \(1-A\) on \(\Omega_t^+\),
    \[
        \int_{\Omega_t^+}
        \bigl(
            \pi_t^{\sgn}(\bx)-q(\bx)
        \bigr)
        \,\mathrm{d}\bx
        =
        (1+M_t^-)-(1-A)
        =
        M_t^-+A.
    \]
    Therefore,
    \[
        \int_{\Omega_t^+}
        \left|
            q(\bx)-\pi_t^{\sgn}(\bx)
        \right|
        \mathrm{d}\bx
        =
        M_t^-+A+2E.
    \]
    On \(\Omega_t^0\cup\Omega_t^-\), we have
    \(\pi_t^{\sgn}(\bx)\leq0\) and \(q(\bx)\geq0\), so
    \[
        \int_{\Omega_t^0\cup\Omega_t^-}
        \left|
            q(\bx)-\pi_t^{\sgn}(\bx)
        \right|
        \mathrm{d}\bx
        =
        A+M_t^-.
    \]
    Combining the two regions gives
    \[
        \frac{1}{2}
        \int_{\R^d}
        \left|
            q(\bx)-\pi_t^{\sgn}(\bx)
        \right|
        \mathrm{d}\bx
        =
        M_t^-+A+E
        \geq
        M_t^-.
    \]
    Equality holds precisely when \(A=0\) and \(E=0\), meaning that \(q\)
    assigns no mass outside \(\Omega_t^+\) and never exceeds
    \(\pi_t^{\sgn}\) inside \(\Omega_t^+\). This proves the stated
    characterization.

    Finally, any
    \(q_{S_t}(\bx)=\pi_t^{\sgn}(\bx)\mathbf{1}\{\bx\in S_t\}\)
    with unit total mass satisfies these two conditions and is therefore
    TV-optimal.
\end{proof}

\vspace{1em}

\begin{assumption}[Regularity of the signed flow]
    \label{ass:signed_flow_regularity}
    Let \(\pi_1^+\) and \(\pi_1^-\) be two target distributions with finite
    first moments, and suppose that both RF branches use the Gaussian
    source \(\pi_0=\mathcal N(\boldsymbol 0,\mI)\). Let
    \((\pi_t^+,v_t^+)\) and \((\pi_t^-,v_t^-)\) denote the corresponding RF
    marginals and velocity fields.

    We assume that the signed marginal
    \[
        (t,\bx)
        \longmapsto
        \pi_t^{\sgn}(\bx)
        =
        (1+\alpha)\pi_t^+(\bx)-\alpha\pi_t^-(\bx)
    \]
    and the signed probability flux
    \[
        (t,\bx)
        \longmapsto
        (1+\alpha)\pi_t^+(\bx)v_t^+(\bx)
        -
        \alpha\pi_t^-(\bx)v_t^-(\bx)
    \]
    are locally \(C^1\) on \((0,1)\times\R^d\). We further assume that the
    zero set is nondegenerate: whenever \(0<t<1\) and
    \(\pi_t^{\sgn}(\bx)=0\),
    \[
        \nabla\pi_t^{\sgn}(\bx)\neq0.
    \]
\end{assumption}

We state the nonpenetration result for \(t<1\), where the Gaussian
interpolation keeps the intermediate marginals smooth. Terminal-time
statements can be obtained by taking \(t\uparrow1\), whenever the corresponding
laws converge.

For the common Gaussian source \(\pi_0=\mathcal N(\boldsymbol 0,\mI)\),
Tweedie's identity gives, for each RF branch,
\[
    \pi_t^\pm(\bx)v_t^\pm(\bx)
    =
    \frac{\bx}{t}\,\pi_t^\pm(\bx)
    +
    \frac{1-t}{t}\,\nabla\pi_t^\pm(\bx),
    \qquad
    0<t<1.
\]
Taking the same signed combination as in the definition of
\(\pi_t^{\sgn}\) yields
\[
    (1+\alpha)\pi_t^+(\bx)v_t^+(\bx)
    -
    \alpha\pi_t^-(\bx)v_t^-(\bx)
    =
    \frac{\bx}{t}\,\pi_t^{\sgn}(\bx)
    +
    \frac{1-t}{t}\,\nabla\pi_t^{\sgn}(\bx).
\]
Therefore, at any nondegenerate point of the zero set, where
\(\pi_t^{\sgn}(\bx)=0\) and
\(\nabla\pi_t^{\sgn}(\bx)\neq0\),
\[
    \bigl(\nabla\pi_t^{\sgn}(\bx)\bigr)^\top
    \left[
        (1+\alpha)\pi_t^+(\bx)v_t^+(\bx)
        -
        \alpha\pi_t^-(\bx)v_t^-(\bx)
    \right]
    =
    \frac{1-t}{t}
    \left\|\nabla\pi_t^{\sgn}(\bx)\right\|^2
    >
    0.
\]
Thus, the signed flux points locally toward increasing
\(\pi_t^{\sgn}\), i.e., toward the positive side of the zero set. This is the
barrier property used in the nonpenetration proof below.

\begin{proposition}[Signed RF remains in the positive region]
    \label{prop:signed_rf_nonpenetration}
    Suppose that Assumption~\ref{ass:signed_flow_regularity}
    holds and that the source-initialized Signed RF ODE is well posed on every
    compact interval \([0,T]\subset[0,1)\). Let
    \(\pi_t^{\sgnrf}\) denote the law of its solution at time \(t\). Then
    \[
        \pi_t^{\sgnrf}(\Omega_t^+)
        =
        1,
        \qquad
        \text{for every }t\in[0,1).
    \]
    In other words, source-initialized Signed RF trajectories never enter the
    zero or negative regions before terminal time.
\end{proposition}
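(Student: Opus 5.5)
We argue trajectory by trajectory, tracking the scalar \(f(t)\coloneqq\pi_t^{\sgn}(\bZ_t)\) along a source-initialized solution. Since \(\pi_0^{\sgn}=\pi_0>0\), every trajectory starts in \(\Omega_0^+\). Fix \(T<1\) and a trajectory on \([0,T]\). Let \(\tau\) be the first time at which \(\bZ_t\) reaches the zero set. The goal is to show \(\tau>T\) for every such trajectory; since \(T<1\) is arbitrary, this gives \(\pi_t^{\sgnrf}(\Omega_t^+)=1\).

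\textbf{Step 1: an ODE for \(f\) on \(\Omega_t^+\).} On \([0,\tau)\) the field \(v_t^{\sgnrf}\) is \(C^1\) near the trajectory, so the chain rule and the signed continuity equation \(\partial_t\pi_t^{\sgn}=-\nabla\!\cdot(\pi_t^{\sgn}v_t^{\sgnrf})\) give
\[
\dot f(t)=\partial_t\pi_t^{\sgn}+\nabla\pi_t^{\sgn}\cdot v_t^{\sgnrf}=-\pi_t^{\sgn}(\bZ_t)\,\bigl(\nabla\!\cdot v_t^{\sgnrf}\bigr)(\bZ_t).
\]
This is the usual Liouville identity, and it yields \(f(t)=f(0)\exp\bigl(-\int_0^t\nabla\!\cdot v_s^{\sgnrf}\,ds\bigr)\). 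Consequently \(f\) can only reach \(0\) if \(\int_0^t \nabla\!\cdot v^{\sgnrf}\) diverges to \(+\infty\). This formula alone does not settle the question, because the divergence blows up near \(\Omega_t^0\).

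\textbf{Step 2: the barrier.} The main work is to show the divergence cannot blow up in this way. We use the Tweedie form of the signed flux, \(J_t=\tfrac{\bx}{t}\pi_t^{\sgn}+\tfrac{1-t}{t}\nabla\pi_t^{\sgn}\), so that
\[
v_t^{\sgnrf}=\frac{\bx}{t}+\frac{1-t}{t}\nabla\log|\pi_t^{\sgn}|
\]
off the zero set. Then
\[
\dot f=\partial_t\pi^{\sgn}+\tfrac{\bx}{t}\cdot\nabla\pi^{\sgn}+\tfrac{1-t}{t}\,\frac{\|\nabla\pi^{\sgn}\|^2}{\pi^{\sgn}}.
\]
Suppose \(\tau\le T\). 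On the compact piece of the path over \([0,\tau]\), the \(C^1\) assumption makes the first two terms bounded by some \(C\). Nondegeneracy of the zero set, together with compactness of the trajectory's closure near \(\bZ_\tau\), gives \(\|\nabla\pi^{\sgn}\|\ge c>0\) in a neighborhood of the hitting point. For \(t\) close to \(\tau\) we therefore get \(\dot f\ge -C+\tfrac{1-T}{T}\,c^2/f\), which is strictly positive once \(f\) is small enough. So \(f\) cannot decrease to \(0\), contradicting \(f(t)\to0\) as \(t\uparrow\tau\). In this sense the \(1/f\) singularity itself acts as the repulsive barrier.

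\textbf{Step 3: conclusion.} Trajectories are defined on \([0,T]\) by the assumed well-posedness. Step 2 shows they cannot approach \(\Omega_t^0\), so \(\bZ_t\in\Omega_t^+\) for all \(t\le T\) almost surely. Hence \(\pi_t^{\sgnrf}(\Omega_t^+)=1\) for every \(t\in[0,1)\).

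\textbf{Main obstacle.} The hardest part is making Step 2 rigorous. We must justify that the trajectory's closure stays in a compact set on which the \(C^1\) bounds and the lower bound on \(\|\nabla\pi^{\sgn}\|\) hold uniformly. We also have to handle a trajectory that accumulates at the zero set without converging to it. We would address both by working on a small space-time neighborhood of a limit point \((\tau,\bx^*)\), using the well-posedness assumption to rule out escape to infinity before \(T\).
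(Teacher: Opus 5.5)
Your proposal is correct and follows essentially the paper's route: you track $\pi_t^{\sgn}(\bz_t)$ up to a putative first hitting time $\tau$, then use the Gaussian-source (Tweedie) identity and nondegeneracy to get $\frac{\mathrm d}{\mathrm dt}\pi_t^{\sgn}(\bz_t)\ge -M + c/\pi_t^{\sgn}(\bz_t)$ near $\tau$, and conclude because a positive quantity that is eventually increasing cannot tend to zero. The differences are cosmetic: you substitute the Tweedie form of the flux along the whole path, so your barrier term is exactly $\tfrac{1-t}{t}\|\nabla\pi_t^{\sgn}\|^2/\pi_t^{\sgn}$, whereas the paper bounds $\nabla\cdot j_t^{\sgn}$ and uses Tweedie only at $(\tau,\bz_\tau)$ together with continuity; in addition, your Liouville Step 1 is not needed, and your compactness concern is settled, as in the paper, by continuity of the well-posed trajectory at $\tau$.
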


\begin{proof}
    Let \(\bz_t\) be a source-initialized Signed RF trajectory. Since both RF
    branches share the same Gaussian source,
    \[
        \pi_0^{\sgn}(\bz_0)
        =
        (1+\alpha)\pi_0(\bz_0)-\alpha\pi_0(\bz_0)
        =
        \pi_0(\bz_0)
        >
        0.
    \]
    Thus, every trajectory starts in the positive region \(\Omega_0^+\).

    Define the signed probability current
    \[
        j_t^{\sgn}(\bx)
        \coloneqq
        (1+\alpha)\pi_t^+(\bx)v_t^+(\bx)
        -
        \alpha\pi_t^-(\bx)v_t^-(\bx).
    \]
    By linearity of the two branch continuity equations,
    \[
        \partial_t\pi_t^{\sgn}
        +
        \nabla\!\cdot j_t^{\sgn}
        =
        0.
    \]
    Moreover, wherever \(\pi_t^{\sgn}(\bx)\neq0\),
    \[
        v_t^{\sgnrf}(\bx)
        =
        \frac{
            j_t^{\sgn}(\bx)
        }{
            \pi_t^{\sgn}(\bx)
        }.
    \]
    Therefore, as long as the trajectory remains in \(\Omega_t^+\), the chain
    rule gives
    \begin{align}
        \frac{\mathrm d}{\mathrm dt}
        \pi_t^{\sgn}(\bz_t)
        &=
        \partial_t\pi_t^{\sgn}(\bz_t)
        +
        \bigl(\nabla\pi_t^{\sgn}(\bz_t)\bigr)^\top\dot{\bz}_t
        \notag
        \\
        &=
        -\nabla\!\cdot j_t^{\sgn}(\bz_t)
        +
        \frac{
            \bigl(\nabla\pi_t^{\sgn}(\bz_t)\bigr)^\top
            j_t^{\sgn}(\bz_t)
        }{
            \pi_t^{\sgn}(\bz_t)
        }.
        \label{eq:signed_density_along_trajectory}
    \end{align}

    Suppose, for contradiction, that a trajectory reaches the zero set for the
    first time at some \(\tau\in(0,1)\). Then
    \[
        \pi_t^{\sgn}(\bz_t)>0
        \quad\text{for }t<\tau,
        \qquad
        \pi_t^{\sgn}(\bz_t)\longrightarrow0
        \quad\text{as }t\uparrow\tau.
    \]
    By the Gaussian-source identity derived above,
    \[
        \bigl(\nabla\pi_\tau^{\sgn}(\bz_\tau)\bigr)^\top
        j_\tau^{\sgn}(\bz_\tau)
        =
        \frac{1-\tau}{\tau}
        \left\|
            \nabla\pi_\tau^{\sgn}(\bz_\tau)
        \right\|^2
        >
        0,
    \]
    where strict positivity follows from
    \(\nabla\pi_\tau^{\sgn}(\bz_\tau)\neq0\).

    By continuity, there exist constants \(c>0\), \(M<\infty\), and
    \(\delta>0\) such that, for every
    \(t\in(\tau-\delta,\tau)\),
    \[
        \bigl(\nabla\pi_t^{\sgn}(\bz_t)\bigr)^\top
        j_t^{\sgn}(\bz_t)
        \geq
        c,
        \qquad
        \left|
            \nabla\!\cdot j_t^{\sgn}(\bz_t)
        \right|
        \leq
        M.
    \]
    Equation~\ref{eq:signed_density_along_trajectory} therefore implies
    \[
        \frac{\mathrm d}{\mathrm dt}
        \pi_t^{\sgn}(\bz_t)
        \geq
        -M
        +
        \frac{c}{\pi_t^{\sgn}(\bz_t)}.
    \]
    As \(\pi_t^{\sgn}(\bz_t)\to0\), the second term becomes dominant, so
    \[
        \frac{\mathrm d}{\mathrm dt}
        \pi_t^{\sgn}(\bz_t)
        >
        0
    \]
    for all \(t\) sufficiently close to \(\tau\). This is impossible:
    a positive quantity that is increasing near \(\tau\) cannot converge to
    zero as \(t\uparrow\tau\).

    Therefore, no source-initialized trajectory can reach the zero set before
    \(t=1\). Since a continuous trajectory cannot enter \(\Omega_t^-\) without
    first crossing \(\Omega_t^0\), every trajectory remains in
    \(\Omega_t^+\). Consequently,
    \[
        \pi_t^{\sgnrf}(\Omega_t^+)
        =
        1,
        \qquad
        \text{for every }t\in[0,1).
    \]
\end{proof}

\begin{theorem}[Sampling law of Signed RF]
    \label{thm:signed_rf_sampling}
    Suppose that Assumption~\ref{ass:signed_flow_regularity}
    holds and that the Signed RF ODE is well posed on every compact interval
    \([0,T]\subset[0,1)\). Let \(\Phi_t(\bz_0)\) denote the position at time
    \(t\) of the trajectory initialized at \(\bz_0\), and define the reachable
    region by
    \[
        \Omega_t^r
        \coloneqq
        \bigl\{
            \Phi_t(\bz_0):
            \bz_0\in\R^d
        \bigr\}.
    \]
    Then, for every \(t<1\), the density of the source-initialized Signed RF
    samples is
    \[
        \pi_t^{\sgnrf}(\bx)
        =
        \pi_t^{\sgn}(\bx)\,
        \mathbf{1}\{\bx\in\Omega_t^r\}.
    \]
    In particular,
    \[
        \int_{\Omega_t^r}
        \pi_t^{\sgn}(\bx)\,\mathrm{d}\bx
        =
        1.
    \]
\end{theorem}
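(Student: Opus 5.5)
The plan is to compare, along each trajectory, the Jacobian determinant of the flow map with the signed density, and use the change-of-variables formula.

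By Proposition~\ref{prop:signed_rf_nonpenetration}, every source-initialized trajectory stays in \(\Omega_t^+\) for \(t<1\). On \(\Omega_t^+\), Assumption~\ref{ass:signed_flow_regularity} makes \(v_t^{\sgnrf}=j_t^{\sgn}/\pi_t^{\sgn}\) locally \(C^1\). Combined with the assumed well-posedness on \([0,T]\), this makes \(\Phi_t\) a \(C^1\) diffeomorphism from \(\R^d\) onto its image \(\Omega_t^r\), which is open. Injectivity comes from uniqueness of ODE solutions. The inverse is obtained by integrating backward from any \(\bx\in\Omega_t^r\); the backward trajectory exists because it coincides with the forward trajectory through \(\bx\). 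I will take this as the setup and then argue pointwise along trajectories.

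Fix \(\bz_0\) and write \(\bz_t=\Phi_t(\bz_0)\) and \(J_t=\det\nabla\Phi_t(\bz_0)>0\). Liouville's formula gives \(\frac{\mathrm d}{\mathrm dt}\log J_t=\nabla\!\cdot v_t^{\sgnrf}(\bz_t)\). For the signed density, the chain-rule computation from the proof of Proposition~\ref{prop:signed_rf_nonpenetration} gives
\[
\frac{\mathrm d}{\mathrm dt}\pi_t^{\sgn}(\bz_t)=-\nabla\!\cdot j_t^{\sgn}(\bz_t)+\nabla\pi_t^{\sgn}(\bz_t)^\top v_t^{\sgnrf}(\bz_t).
\]
Since \(j_t^{\sgn}=\pi_t^{\sgn}v_t^{\sgnrf}\) on \(\Omega_t^+\), we have \(\nabla\!\cdot j_t^{\sgn}=\nabla\pi_t^{\sgn}\cdot v_t^{\sgnrf}+\pi_t^{\sgn}\nabla\!\cdot v_t^{\sgnrf}\). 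Substituting, the right-hand side becomes \(-\pi_t^{\sgn}(\bz_t)\,\nabla\!\cdot v_t^{\sgnrf}(\bz_t)\). Hence \(\frac{\mathrm d}{\mathrm dt}\log\pi_t^{\sgn}(\bz_t)=-\frac{\mathrm d}{\mathrm dt}\log J_t\), which is legitimate because \(\pi_t^{\sgn}(\bz_t)>0\). Therefore
\[
\pi_t^{\sgn}(\bz_t)\,J_t=\pi_0^{\sgn}(\bz_0)=\pi_0(\bz_0).
\]

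By the change-of-variables formula for the push-forward of \(\pi_0\) under the diffeomorphism \(\Phi_t\), the density of \(\bZ_t\) at \(\bx=\Phi_t(\bz_0)\in\Omega_t^r\) is \(\pi_0(\bz_0)/J_t=\pi_t^{\sgn}(\bx)\), and it is zero off \(\Omega_t^r\). This is the claimed formula. Integrating it, and using that \(\pi_t^{\sgnrf}\) is a probability measure, gives \(\int_{\Omega_t^r}\pi_t^{\sgn}=1\).

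The main obstacle is regularity near \(t=0\). The assumption gives \(C^1\) only on \((0,1)\times\R^d\), and the Tweedie-type expressions contain \(1/t\) factors, so Liouville's formula and the chain rule must be justified on \([\varepsilon,t]\) and then passed to the limit \(\varepsilon\downarrow0\). I would handle this with continuity of \(\pi_t^{\sgn}\) and of \(\Phi_t\) at \(t=0\), using that both branch marginals tend to \(\pi_0\). A secondary point is that \(\Omega_t^r\) is open, so its boundary carries no mass; this follows from invariance of domain, or directly from the diffeomorphism property.
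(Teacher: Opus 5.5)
Your proposal is correct and follows the paper's proof essentially step for step: nonpenetration makes $\Phi_t$ a $C^1$ flow map onto $\Omega_t^r$; Liouville's formula combined with the signed continuity equation and $j_t^{\sgn}=\pi_t^{\sgn}v_t^{\sgnrf}$ on $\Omega_t^+$ gives $\pi_t^{\sgn}(\Phi_t(\bz_0))\,J_t(\bz_0)=\pi_0(\bz_0)$; and change of variables followed by integration yields both claims. Your extra care about the injectivity and openness of $\Phi_t(\R^d)$ and about the endpoint $t=0$ fills in points the paper treats as standard ODE theory, though the $\varepsilon\downarrow0$ limit also needs $J_\varepsilon\to1$, which holds because, for the Gaussian source, the branch marginals, fluxes, and their spatial gradients are continuous up to $t=0$ (the $1/t$ in the Tweedie form is removable).
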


\begin{proof}
    Recall that the signed probability current is
    \[
        j_t^{\sgn}(\bx)
        =
        (1+\alpha)\pi_t^+(\bx)v_t^+(\bx)
        -
        \alpha\pi_t^-(\bx)v_t^-(\bx).
    \]
    By linearity of the two branch continuity equations,
    \[
        \partial_t\pi_t^{\sgn}
        +
        \nabla\!\cdot j_t^{\sgn}
        =
        0.
    \]
    Moreover, wherever \(\pi_t^{\sgn}(\bx)\neq0\),
    \[
        j_t^{\sgn}(\bx)
        =
        \pi_t^{\sgn}(\bx)v_t^{\sgnrf}(\bx).
    \]
    Therefore, on the positive region,
    \[
        \partial_t\pi_t^{\sgn}
        +
        \nabla\!\cdot
        \bigl(
            \pi_t^{\sgn}v_t^{\sgnrf}
        \bigr)
        =
        0.
    \]

    By Proposition~\ref{prop:signed_rf_nonpenetration}, every
    source-initialized trajectory remains in \(\Omega_t^+\) for \(t<1\).
    Hence, the Signed RF velocity is locally \(C^1\) along the realized
    trajectories, and standard ODE theory implies that \(\Phi_t\) is a
    \(C^1\) flow map onto \(\Omega_t^r\).

    Let
    \[
        J_t(\bz_0)
        \coloneqq
        \left|
            \det D\Phi_t(\bz_0)
        \right|
    \]
    denote the Jacobian determinant of the flow map. Along the trajectory
    \(\Phi_t(\bz_0)\), Liouville's formula gives
    \[
        \frac{\mathrm d}{\mathrm dt}
        J_t(\bz_0)
        =
        J_t(\bz_0)\,
        \nabla\!\cdot
        v_t^{\sgnrf}\bigl(\Phi_t(\bz_0)\bigr).
    \]
    Combining this identity with the continuity equation yields
    \begin{align}
        \frac{\mathrm d}{\mathrm dt}
        \Bigl[
            \pi_t^{\sgn}\bigl(\Phi_t(\bz_0)\bigr)
            J_t(\bz_0)
        \Bigr]
        &=
        \Bigl[
            \partial_t\pi_t^{\sgn}
            +
            \bigl(\nabla\pi_t^{\sgn}\bigr)^\top v_t^{\sgnrf}
            +
            \pi_t^{\sgn}\nabla\!\cdot v_t^{\sgnrf}
        \Bigr]_{\bx=\Phi_t(\bz_0)}
        J_t(\bz_0)
        \notag
        \\
        &=
        \Bigl[
            \partial_t\pi_t^{\sgn}
            +
            \nabla\!\cdot
            \bigl(
                \pi_t^{\sgn}v_t^{\sgnrf}
            \bigr)
        \Bigr]_{\bx=\Phi_t(\bz_0)}
        J_t(\bz_0)
        \notag
        \\
        &=
        0.
        \label{eq:signed_mass_conservation_along_flow}
    \end{align}
    Thus, signed mass is conserved along the flow map. Since
    \(\Phi_0(\bz_0)=\bz_0\), \(J_0(\bz_0)=1\), and
    \[
        \pi_0^{\sgn}
        =
        (1+\alpha)\pi_0-\alpha\pi_0
        =
        \pi_0,
    \]
    Eq.~\ref{eq:signed_mass_conservation_along_flow} gives
    \[
        \pi_t^{\sgn}\bigl(\Phi_t(\bz_0)\bigr)
        J_t(\bz_0)
        =
        \pi_0(\bz_0).
    \]

    Now let \(f\) be any compactly supported test function. By the definition
    of the source-initialized sampling law,
    \[
        \int_{\R^d}
        f(\bx)\pi_t^{\sgnrf}(\bx)\,\mathrm{d}\bx
        =
        \int_{\R^d}
        f\bigl(\Phi_t(\bz_0)\bigr)
        \pi_0(\bz_0)\,\mathrm{d}\bz_0.
    \]
    Substituting the conservation identity above and applying the
    change-of-variables formula gives
    \begin{align*}
        \int_{\R^d}
        f(\bx)\pi_t^{\sgnrf}(\bx)\,\mathrm{d}\bx
        &=
        \int_{\R^d}
        f\bigl(\Phi_t(\bz_0)\bigr)
        \pi_t^{\sgn}\bigl(\Phi_t(\bz_0)\bigr)
        J_t(\bz_0)\,\mathrm{d}\bz_0
        \\
        &=
        \int_{\Omega_t^r}
        f(\bx)\pi_t^{\sgn}(\bx)\,\mathrm{d}\bx.
    \end{align*}
    Since this holds for every test function \(f\),
    \[
        \pi_t^{\sgnrf}(\bx)
        =
        \pi_t^{\sgn}(\bx)\,
        \mathbf{1}\{\bx\in\Omega_t^r\}
    \]
    almost everywhere. Both sides are integrable; integrating this identity
    over \(\mathbb R^d\) (equivalently, applying it to compactly supported
    cutoffs converging to \(1\) and using dominated convergence) gives
    \[
        \int_{\Omega_t^r}
        \pi_t^{\sgn}(\bx)\,\mathrm{d}\bx
        =
        1.
    \]
\end{proof}

\begin{remark}[Discussion of the assumptions]
    Assumption~\ref{ass:signed_flow_regularity} serves two purposes.
    First, the local \(C^1\) regularity ensures that the continuity equation,
    the Signed RF ODE, and the Jacobian calculation above are well defined.
    Such assumptions are standard in continuous-time flow
    analyses~\citep{chen2018neural,lipman2022flow,albergo2022building,
    albergo2023stochastic}.

    Second, the condition
    \(
        \nabla\pi_t^{\sgn}(\bx)\neq0
    \)
    on \(\Omega_t^0\) rules out degenerate points of the signed zero set. It
    ensures that the zero set is locally a smooth boundary and, together with
    the Gaussian-source identity, that the signed probability current points
    toward the positive side of this boundary. Degenerate zero sets may require
    a separate analysis and are not covered by the present result.

    We state the theorem for \(t<1\) because the Gaussian interpolation smooths
    the intermediate marginals whenever \(t<1\), whereas the terminal
    distributions at \(t=1\) may be nonsmooth or singular. When the corresponding
    laws converge, the terminal-time statement follows by taking
    \(t\uparrow1\).
\end{remark}

\vspace{1em}

\paragraph{Online density-ratio tracking.}
We derive the evolution of the density ratio along a source-initialized Signed
RF trajectory \(\bz_t\). Recall
\[
    r_t(\bx)
    =
    \frac{\pi_t^-(\bx)}{\pi_t^+(\bx)},
    \qquad
    u_t
    \coloneqq
    \log r_t(\bz_t).
\]
Since the two branches share the same source distribution,
\(r_0(\bz_0)=1\), and therefore \(u_0=0\).

Let
\(
    s_t^\pm(\bx)\coloneqq\nabla\log\pi_t^\pm(\bx)
\)
denote the branch score functions. Each branch satisfies the continuity
equation
\[
    \partial_t\pi_t^\pm
    +
    \nabla\!\cdot
    \bigl(
        \pi_t^\pm v_t^\pm
    \bigr)
    =
    0.
\]
Dividing by \(\pi_t^\pm\) gives
\[
    \partial_t\log\pi_t^\pm
    =
    -\nabla\!\cdot v_t^\pm
    -
    (v_t^\pm)^\top s_t^\pm.
\]

Along the Signed RF trajectory,
\[
    \dot{\bz}_t
    =
    v_t^+(\bz_t)
    +
    \lambda_t^\alpha(\bz_t)\Delta v_t(\bz_t),
    \qquad
    \Delta v_t
    \coloneqq
    v_t^+-v_t^-,
\]
where
\[
    \lambda_t^\alpha(\bz_t)
    =
    \frac{
        \alpha\exp(u_t)
    }{
        (1+\alpha)-\alpha\exp(u_t)
    }.
\]
Using
\(
    u_t
    =
    \log\pi_t^-(\bz_t)-\log\pi_t^+(\bz_t)
\)
and applying the chain rule yields
\begin{align}
    \dot{u}_t
    &=
    \nabla\!\cdot\Delta v_t(\bz_t)
    +
    \Delta v_t(\bz_t)^\top s_t^-(\bz_t)
    \notag
    \\
    &\quad
    +
    \lambda_t^\alpha(\bz_t)\,
    \Delta v_t(\bz_t)^\top
    \bigl(
        s_t^-(\bz_t)-s_t^+(\bz_t)
    \bigr).
    \label{eq:online_ratio_tracking_restate}
\end{align}
This recovers Eq.~\ref{eq:online_ratio_tracking} in the main text.

For RF with the Gaussian source
\(\pi_0=\mathcal N(\boldsymbol 0,\mI)\), the branch scores satisfy
\[
    s_t^\pm(\bx)
    =
    \frac{
        t\,v_t^\pm(\bx)-\bx
    }{
        1-t
    },
    \qquad
    s_t^-(\bx)-s_t^+(\bx)
    =
    -\frac{t}{1-t}\Delta v_t(\bx).
\]
Substituting these identities into
Eq.~\ref{eq:online_ratio_tracking_restate} gives the velocity-only update
\begin{equation}
    \dot{u}_t
    =
    \nabla\!\cdot\Delta v_t(\bz_t)
    +
    \frac{
        \Delta v_t(\bz_t)^\top
        \bigl(
            t\,v_t^-(\bz_t)-\bz_t
        \bigr)
    }{
        1-t
    }
    -
    \lambda_t^\alpha(\bz_t)
    \frac{t}{1-t}
    \bigl\|
        \Delta v_t(\bz_t)
    \bigr\|^2.
    \label{eq:online_ratio_tracking_velocity_only}
\end{equation}

Thus, the density ratio can be tracked jointly with the Signed RF trajectory
by solving the augmented ODE
\[
\begin{cases}
    \displaystyle
    \dot{\bz}_t
    =
    v_t^+(\bz_t)
    +
    \lambda_t^\alpha(\bz_t)\Delta v_t(\bz_t),
    \\[0.8em]
    \displaystyle
    \dot{u}_t
    =
    \nabla\!\cdot\Delta v_t(\bz_t)
    +
    \dfrac{
        \Delta v_t(\bz_t)^\top
        \bigl(
            t\,v_t^-(\bz_t)-\bz_t
        \bigr)
    }{
        1-t
    }
    -
    \lambda_t^\alpha(\bz_t)
    \dfrac{t}{1-t}
    \bigl\|
        \Delta v_t(\bz_t)
    \bigr\|^2,
\end{cases}
\]
with
\[
    \bz_0\sim\pi_0,
    \qquad
    u_0=0,
    \qquad
    \lambda_t^\alpha(\bz_t)
    =
    \frac{
        \alpha\exp(u_t)
    }{
        (1+\alpha)-\alpha\exp(u_t)
    }.
\]
The density ratio along the trajectory is then recovered as
\(
    r_t(\bz_t)=\exp(u_t)
\).
The divergence
\(\nabla\!\cdot\Delta v_t(\bz_t)\) can be evaluated exactly when tractable or
estimated using Hutchinson's trace estimator.

\section{Additional Experimental Details}

\subsection{Additional 2D Gaussian Mixture Experiments}
\label{sec:additional_gaussian_mixtures}

\paragraph{Additional results on 2D toys.} Each toy dataset consists of a positive target \(\pi_1^+\) and a negative target \(\pi_1^-\), both defined as mixtures of isotropic Gaussians in \(\mathbb R^2\) with specified component means, weights, and variances. We train separate MLP-based RF models on samples from the two targets, yielding the velocity fields \(v_t^+\) and \(v_t^-\). Each velocity model is a four-layer MLP with hidden width \(256\), SiLU activations, and a linear two-dimensional output layer. We train the models for \(10{,}000\) steps using Adam~\citep{kingma2014adam}, with a learning rate of \(10^{-3}\) and batch size
\(1024\).

For Gaussian-mixture targets, the intermediate RF marginals \(\pi_t^+\) and \(\pi_t^-\) remain Gaussian mixtures, allowing the density ratio to be evaluated in closed form. In particular, consider a Gaussian component
\(\bX_1\sim\mathcal N(\boldsymbol{\mu}_k,\sigma_k^2\mI)\) and the source \(\bX_0\sim\mathcal N(\boldsymbol 0,\mI)\). Under the RF interpolation \(\bX_t=t\bX_1+(1-t)\bX_0\), its intermediate marginal is
\(
    \bX_t
    \sim
    \mathcal N\!\left(
        t\boldsymbol{\mu}_k,\,
        \bigl((1-t)^2+t^2\sigma_k^2\bigr)\mI
    \right).
\)
Therefore,
\begin{equation}
    r_t(\bx)
    =
    \frac{\pi_t^-(\bx)}{\pi_t^+(\bx)}
    =
    \frac{
        \displaystyle
        \sum_{k=1}^{K_-}
        w_k^-\,
        \mathcal N\!\left(
            \bx;\,
            t\boldsymbol{\mu}_k^-,\,
            \bigl((1-t)^2+t^2(\sigma_k^-)^2\bigr)\mI
        \right)
    }{
        \displaystyle
        \sum_{k=1}^{K_+}
        w_k^+\,
        \mathcal N\!\left(
            \bx;\,
            t\boldsymbol{\mu}_k^+,\,
            \bigl((1-t)^2+t^2(\sigma_k^+)^2\bigr)\mI
        \right)
    }.
    \label{eq:gaussian_mixture_density_ratio}
\end{equation}
We use this analytic ratio as ground truth, allowing us to study the behavior
of Signed RF independently of density-ratio estimation error.
Fig.~\ref{fig:additional_gaussian_mixtures} shows additional visualizations across a range of geometries. In all cases, sampling starts from the same Gaussian source and uses Euler integration with \(200\) steps. The background heatmap visualizes the target signed density at \(t=1\), \(\pi_1^{\sgn}=(1+\alpha)\,\pi_1^+-\alpha\,\pi_1^-\); blue denotes positive regions and pink denotes negative regions.

\paragraph{Density-ratio estimation analysis.}
We evaluate the accuracy and stability of the density-ratio estimators. In the same 2D setting, we train a density-ratio classifier with the same depth and hidden width as the velocity networks. The classifier outputs a single logit and is trained with binary cross-entropy on balanced samples from the positive and negative branches. We use Adam for \(10{,}000\) steps, with a learning rate of \(10^{-3}\) and batch size \(2048\).

Using the closed-form ratio in Eq.~\ref{eq:gaussian_mixture_density_ratio} as ground truth, we compare three variants of Signed RF: the analytic ratio, the classifier-based estimate, and the online-tracked estimate. All variants use the same initial Gaussian noise and the same velocity fields \(v_t^+\) and \(v_t^-\), differing only in how \(r_t(\bx)\) is obtained during sampling. Results are shown in Figs.~\ref{fig:density_ratio_ablation_bimodal}, \ref{fig:density_ratio_ablation_vertical}, \ref{fig:density_ratio_ablation_ring}, and \ref{fig:density_ratio_ablation_diagonal}. Panel (e) shows the mean effective guidance scale \(\lambda_t^\alpha\); the three variants agree closely, except near the beginning of the trajectory. Panel (f) shows the mean \(\log r_t\). Both practical estimators remain close to the analytic ratio over most of the trajectory, although the tracked estimate deviates slightly near the end, likely because of accumulated numerical integration error. Panel (g) shows the sample-wise guidance scales, which vary substantially across trajectories and toy configurations. This confirms that Signed RF guidance is state- and sample-dependent rather than a fixed time schedule.

Despite the observed differences in \(r_t\) and \(\lambda_t^\alpha\), the analytic,
classifier-based, and tracked variants produce visually similar samples. These
results suggest that Signed RF is robust to moderate density-ratio estimation
error in these examples, with both practical estimators closely approaching
the sampling behavior obtained using the exact ratio.

\begin{figure}
    \centering
    \includegraphics[width=\textwidth]{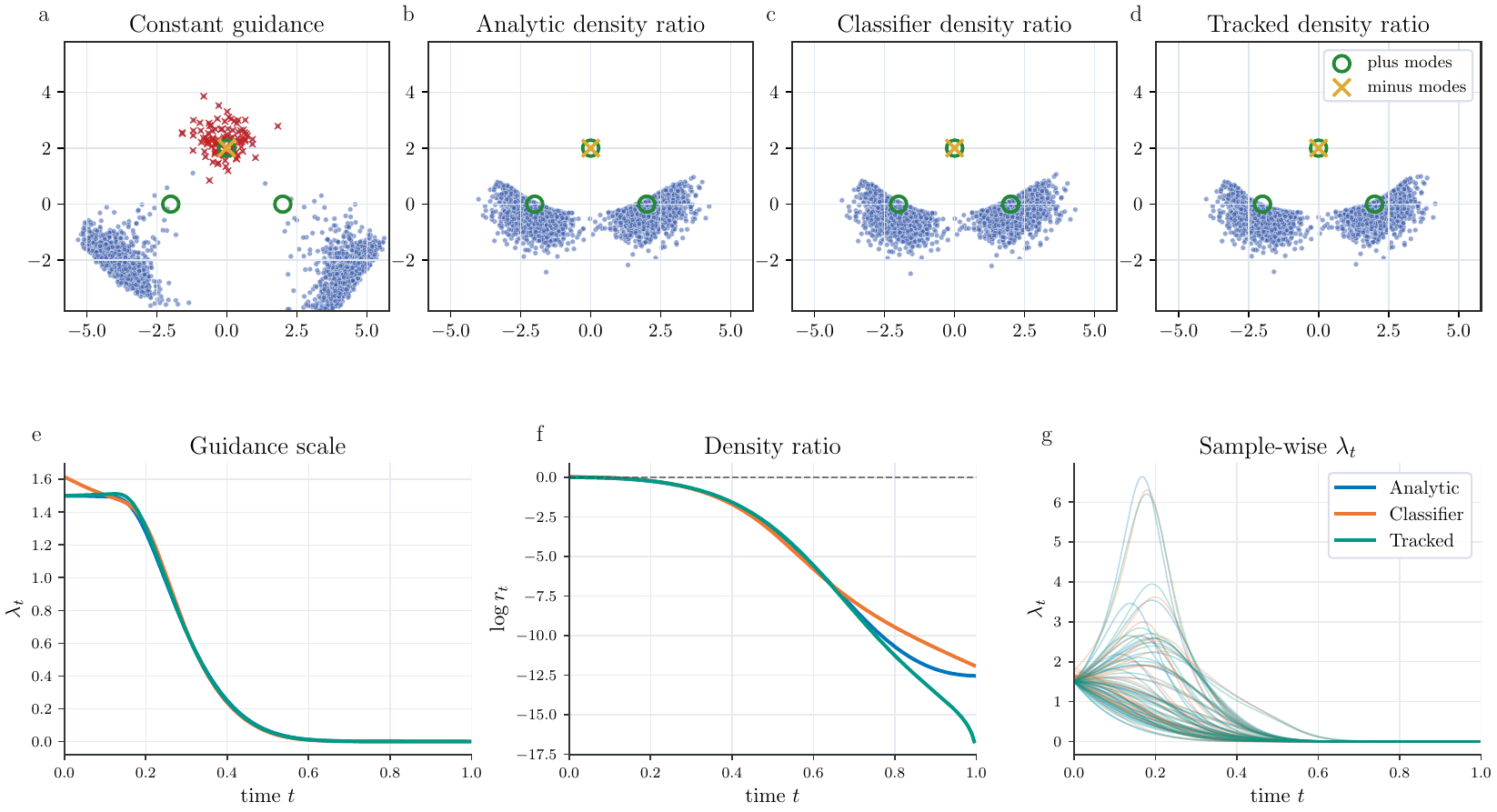}
    \caption{\textbf{Density-ratio estimation ablation.}
    Columns compare constant guidance, the analytic ratio, the classifier-based
    estimate, and the online-tracked estimate, using the same initial Gaussian
    noise and velocity fields \(v_t^+\) and \(v_t^-\). Panels (e)--(g) show the
    mean guidance scale \(\lambda_t^\alpha\), the mean log-density ratio \(\log r_t\), and
    the sample-wise guidance scales along individual trajectories. Despite small
    estimation errors, both practical estimators produce samples close to those
    obtained with the analytic ratio, indicating robustness to moderate
    density-ratio estimation error.}
    \label{fig:density_ratio_ablation_bimodal}
\end{figure}

\begin{figure}
    \centering
    \includegraphics[width=\textwidth]{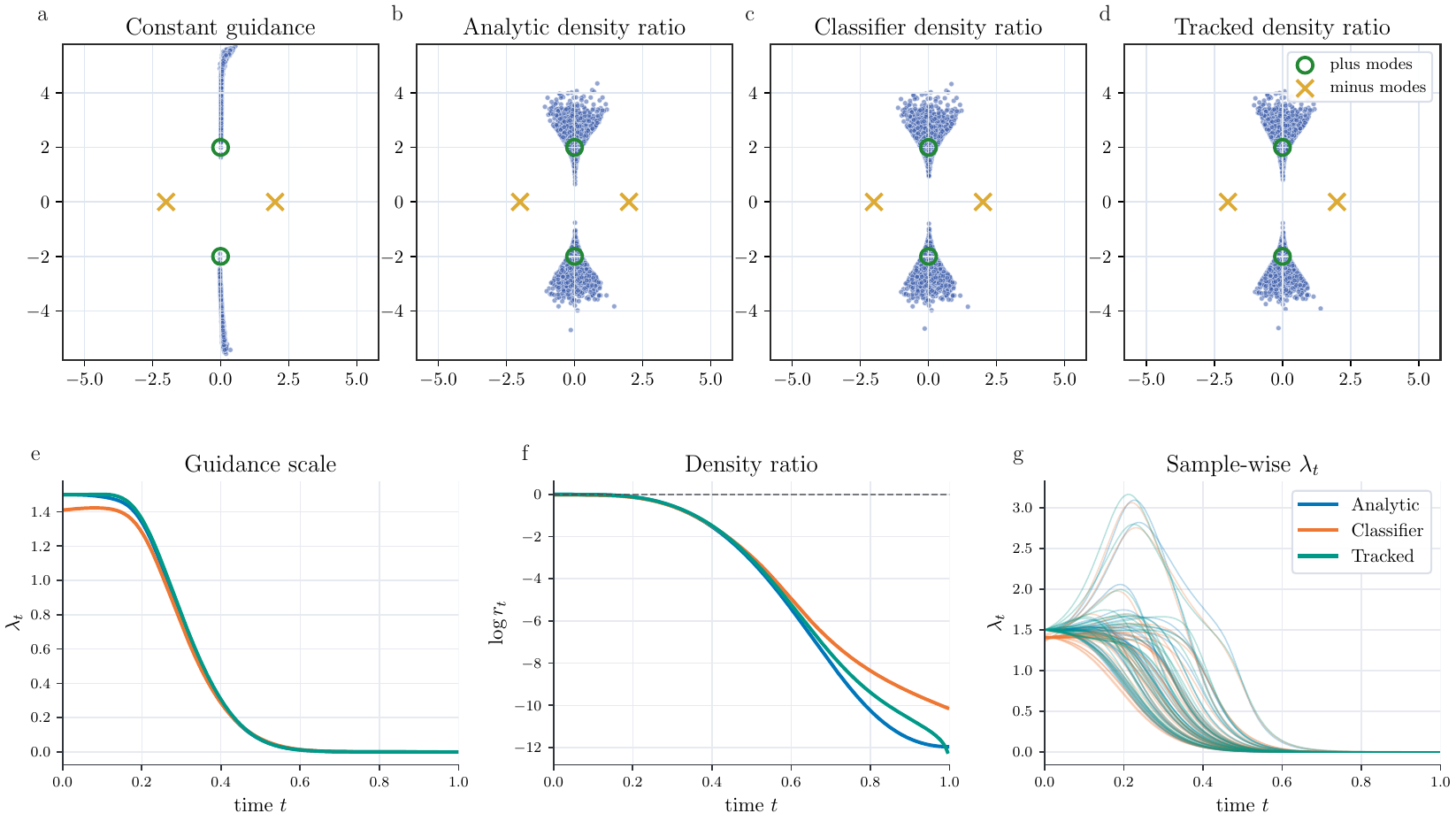}
    \caption{Additional density-ratio estimation ablation in a different setup.}
    \label{fig:density_ratio_ablation_vertical}
\end{figure}

\begin{figure}
    \centering
    \includegraphics[width=\textwidth]{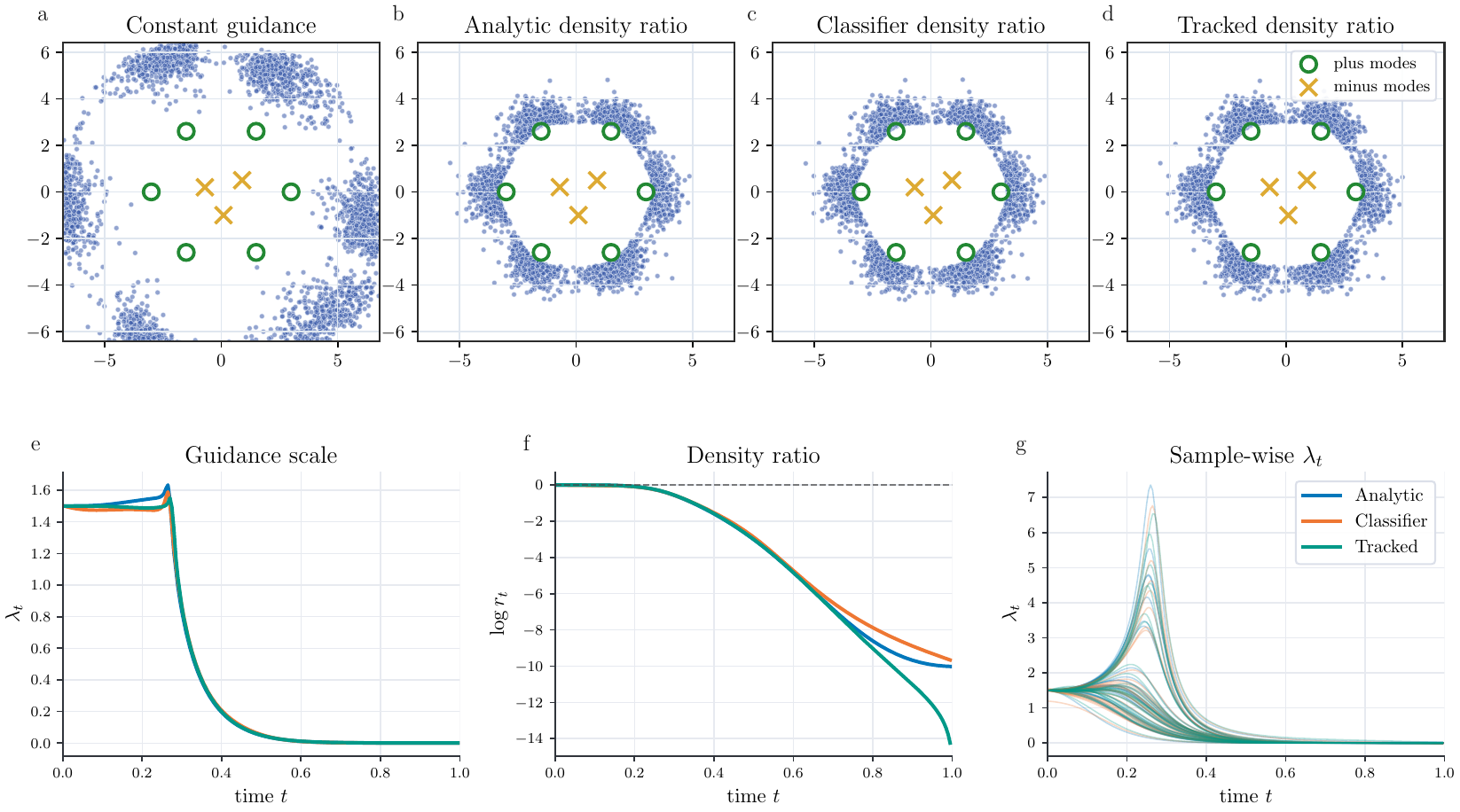}
    \caption{Additional density-ratio estimation ablation in a different setup.}
    \label{fig:density_ratio_ablation_ring}
\end{figure}

\begin{figure}
    \centering
    \includegraphics[width=\textwidth]{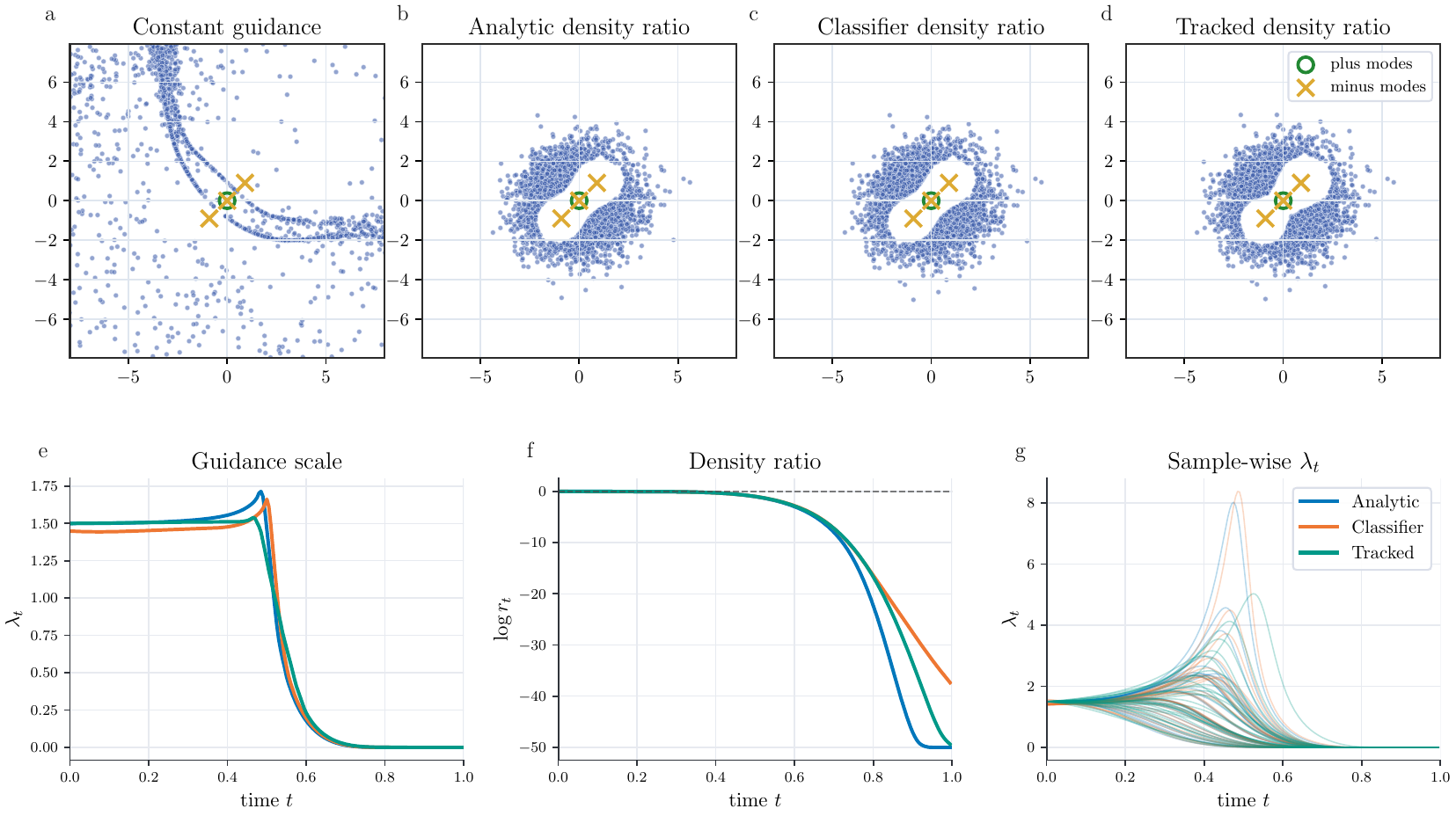}
    \caption{Additional density-ratio estimation ablation in a different setup.}
    \label{fig:density_ratio_ablation_diagonal}
\end{figure}

\begin{figure*}
    \centering
    \setlength{\abovecaptionskip}{2pt}
    \setlength{\belowcaptionskip}{0pt}
    \vspace{-1.5em}
    \includegraphics[width=0.92\textwidth]{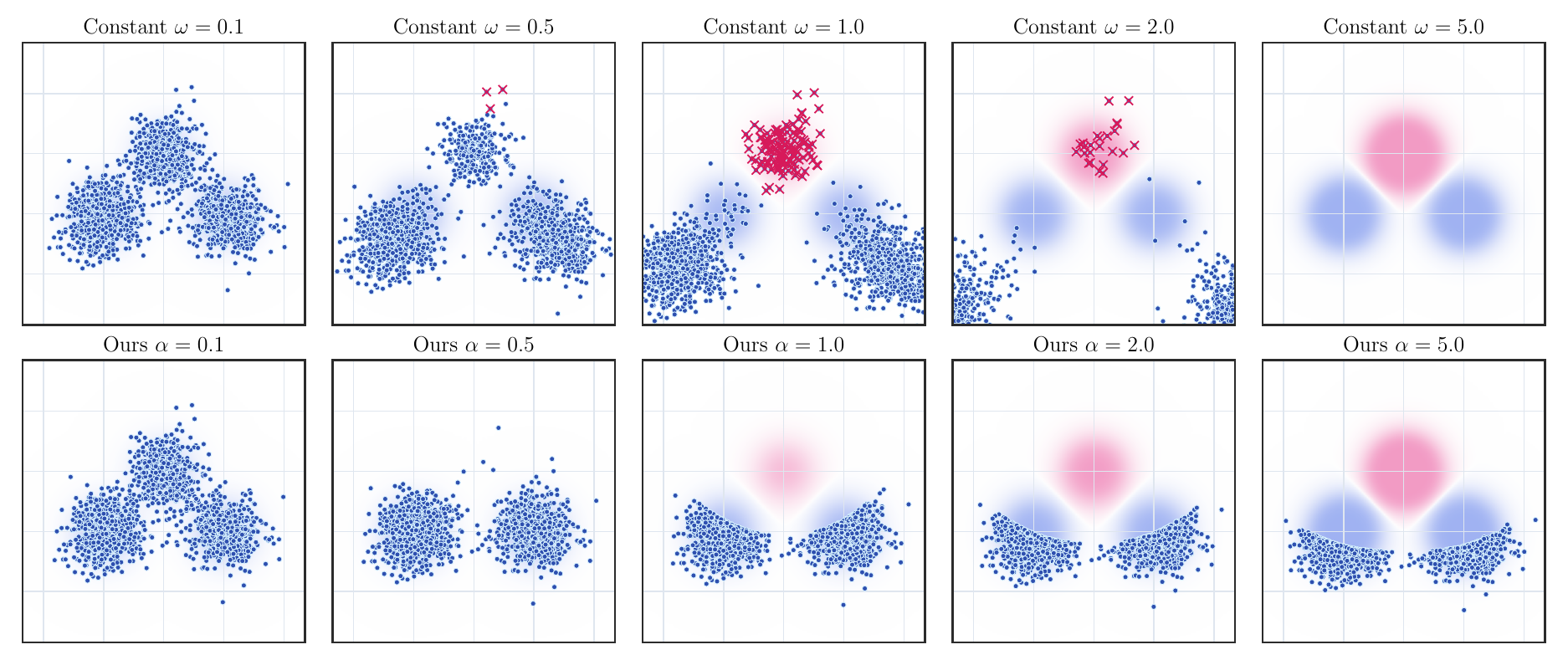}

    \includegraphics[width=0.92\textwidth]{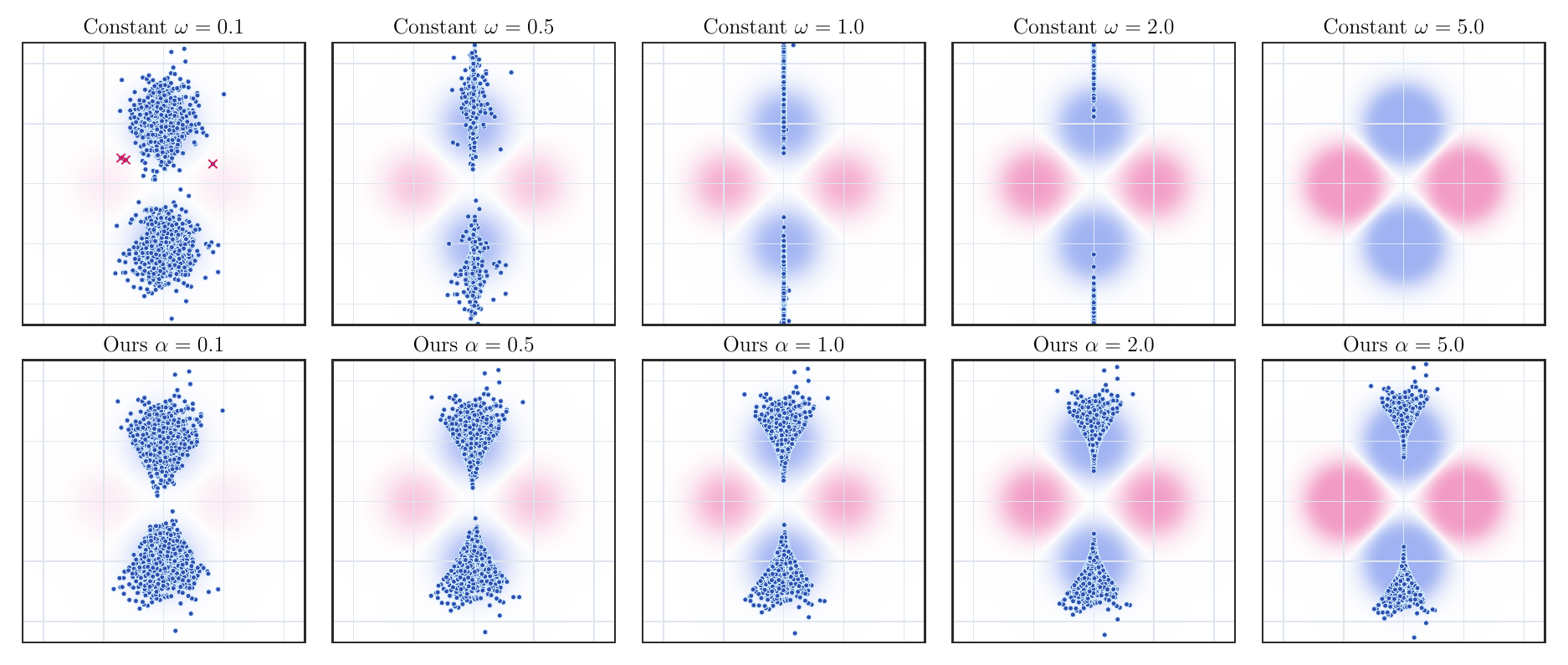}

    \includegraphics[width=0.92\textwidth]{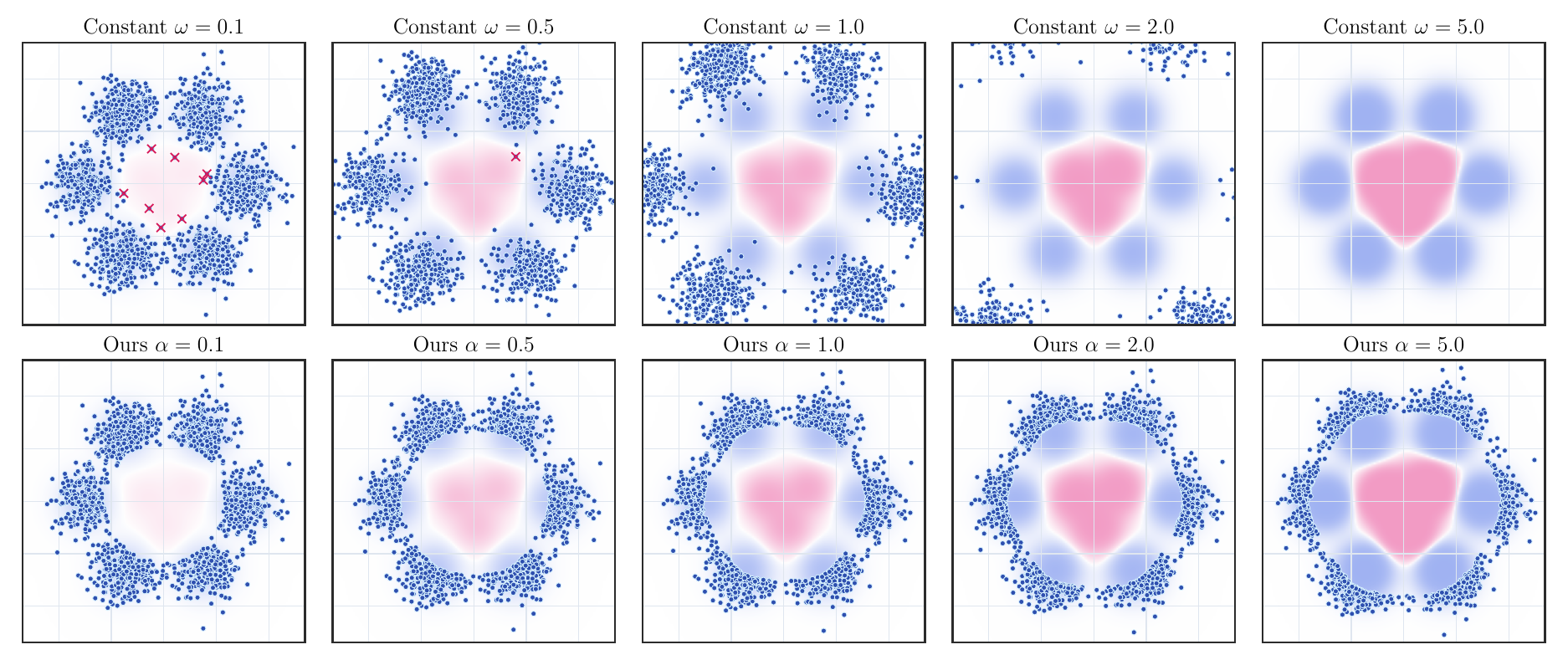}

    \includegraphics[width=0.92\textwidth]{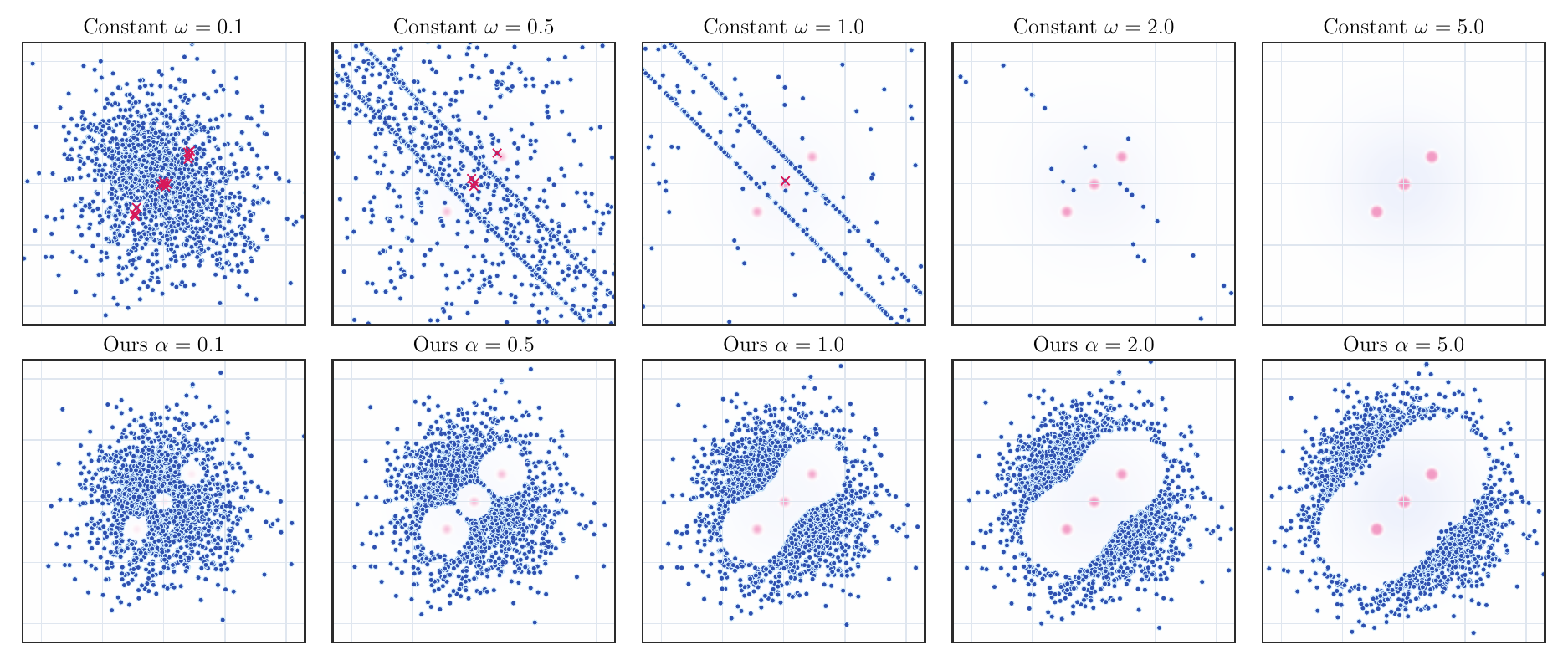}

    \caption{
        Additional 2D examples across multiple setups.}
    \label{fig:additional_gaussian_mixtures}
\end{figure*}

\FloatBarrier

\subsection{Additional ImageNet-256 Details}
\label{sec:imagenet_cfg_details}

\paragraph{Comparison with related methods.}
Table~\ref{tab:imagenet_cfg_system_comparison} extends the comparison to
several related guidance methods, including ADG~\citep{jin2025angle},
CFG++~\citep{chungcfg++}, and MG~\citep{liao2026momentum}. We report these
results in the appendix because these methods are not derived from the same
signed-target formulation and therefore are not direct baselines for our
theoretical framework. Nevertheless, among methods using the same RF backbone
without additional representation alignment, Signed RF achieves comparable or
lower FID. We report REPA~\citep{yu2025representation} separately because it introduces
vision foundation-model alignment and uses a substantially larger training
budget. Our RF backbone is trained for \(320\) epochs without such alignment,
whereas REPA is trained for \(800\) epochs with VFM alignment. Despite this
difference, Signed RF achieves a comparable FID with substantially fewer
sampling steps: \(1.41\) at \(64\) NFE, compared with \(1.42\) at \(250\) NFE
for REPA.

\begin{table}[t]
    \centering
    \small
    \caption{System-level ImageNet-256 comparison with related guidance baselines.}
    \label{tab:imagenet_cfg_system_comparison}
    \vspace{0.35em}
    \setlength{\tabcolsep}{5pt}
    \renewcommand{\arraystretch}{1.08}
    \begin{tabular}{@{\hspace{8pt}}c@{\hspace{8pt}}c@{\hspace{12pt}}*{3}{>{\centering\arraybackslash}p{0.42in}}@{\hspace{8pt}}}
        \toprule
        NFE & Method & FID \(\downarrow\) & Prec. \(\uparrow\) & Rec. \(\uparrow\) \\
        \midrule
        \multicolumn{5}{@{\hspace{8pt}}l}{\textit{Same RF backbone (320 epochs; no VFM alignment)}} \\
        \multirow{5}{*}{16}
            & CFG & 2.38 & 0.828 & 0.566 \\
            & ADG & 2.32 & 0.827 & 0.570 \\
            & CFG++ & 2.62 & 0.842 & 0.572 \\
            & MG & 1.85 & 0.820 & 0.600 \\
            & Signed RF
            & \textbf{1.82}
            & 0.801
            & 0.615 \\
        \midrule
        \multirow{5}{*}{32}
            & CFG & 1.87 & 0.815 & 0.598 \\
            & ADG & 2.00 & 0.835 & 0.582 \\
            & CFG++ & 1.71 & 0.809 & 0.624 \\
            & MG & 1.71 & 0.810 & 0.620 \\
            & Signed RF
            & \textbf{1.51}
            & 0.800
            & 0.629 \\
        \midrule
        \multirow{5}{*}{64}
            & CFG & 1.73 & 0.821 & 0.601 \\
            & ADG & 1.85 & 0.793 & 0.634 \\
            & CFG++ & 3.39 & 0.838 & 0.573 \\
            & MG & 1.60 & 0.810 & 0.620 \\
            & Signed RF
            & \textbf{1.41}
            & 0.797
            & 0.627 \\
        \midrule
        \multicolumn{5}{@{\hspace{8pt}}l}{\textit{VFM-aligned representation (800 epochs)}} \\
        250 & REPA & 1.42 & 0.800 & 0.650 \\
        \bottomrule
    \end{tabular}
\end{table}

\paragraph{Implementation details.}
We use the pretrained ImageNet \(256\times256\) model released with the
Rectified Flow implementation~\citep{lq2024rectifiedflow} and perform sampling
in the latent space of a
KL-16 autoencoder. At each Euler step, evaluating the model with class label
\(c\) produces the conditional velocity \(v_t^+(\bx_t,c)\), while evaluating
it with the null class token \(c_\varnothing\) produces the unconditional
velocity \(v_t^-(\bx_t)\).

The CFG baseline uses the constant-guidance field
\(
    \hat v_t^{\mathrm{CFG}}(\bx_t,c)
    =
    v_t^+(\bx_t,c)
    +
    \omega
    \bigl(
        v_t^+(\bx_t,c)-v_t^-(\bx_t)
    \bigr).
\)
Signed RF uses the same guidance direction but replaces the constant scale
\(\omega\) with the state-dependent scale
\(\lambda_t^\alpha(\bx_t,c)\).

The direction \(v_t^+-v_t^-\) serves two related purposes. First, it strengthens
class conditioning by moving away from directions shared across the broader
unconditional distribution. Second, because the unconditional branch averages
over many classes and modes, it acts as a smoother reference field.
Extrapolating away from this reference can compensate for the smoothing effect
of neural velocity predictions~\citep{liao2026momentum}. Thus, the
unconditional branch is not treated as an undesirable distribution to be
removed; instead, it provides a reference direction for steering samples
toward better class alignment and perceptual quality.

We estimate the density ratio using a separate timestep- and class-conditioned
ViT classifier~\citep{dosovitskiy2020image} trained in the same latent space.
Following the classifier-based estimator in
Sec.~\ref{sec:signed_rf_practice}, the classifier
\(p_t^\phi(y\mid\bx_t,c)\) predicts the branch label
\(y\in\{-,+\}\). With balanced positive and negative sampling,
\[
    \hat r_t(\bx_t,c)
    =
    \frac{
        p_t^\phi(-\mid\bx_t,c)
    }{
        p_t^\phi(+\mid\bx_t,c)
    },
    \qquad
    \lambda_t^\alpha(\bx_t,c)
    =
    \frac{
        \alpha\,\hat r_t(\bx_t,c)
    }{
        (1+\alpha)-\alpha\,\hat r_t(\bx_t,c)
    }.
\]

The classifier training examples are designed to capture both class
misalignment and poor sample quality. Positive endpoints \(\bx_1^+\) are
ImageNet latents paired with their ground-truth class labels. Negative
endpoints \(\bx_1^-\) include class-aligned but low-quality samples generated
by an earlier model checkpoint without CFG, as well as samples paired with an
incorrect conditioning label. The former teaches the classifier to identify
over-smoothed or low-quality states, while the latter teaches it to detect
class-inconsistent states.

For each positive--negative pair, we sample
\(t\sim\mathrm{Unif}[0,1]\) and
\(\bx_0\sim\mathcal N(\boldsymbol 0,\mI)\), and construct
\[
    \bx_t^+
    =
    (1-t)\bx_0+t\bx_1^+,
    \qquad
    \bx_t^-
    =
    (1-t)\bx_0+t\bx_1^-.
\]
The two branches share the same \((\bx_0,t)\) within each pair, reducing
variation unrelated to the endpoint distributions.

A single classifier is shared across all ImageNet classes and receives the
timestep and class label as conditioning. Positive samples are paired with their
ground-truth labels. For negative samples, the condition is set to the
associated sample class with probability \(0.9\) and to a randomly selected
different class with probability \(0.1\). The classifier has \(12\)
Transformer layers, a hidden dimension of \(768\), \(12\) attention heads, and
AdaLN-style timestep and class conditioning~\citep{peebles2023scalable}, giving it roughly
the capacity of a \texttt{DiT-B} model. We train it from scratch using
AdamW~\citep{loshchilov2017decoupled}, with a learning rate of
\(2\times10^{-4}\), weight decay \(0.01\), global batch size \(1024\), and
EMA rate \(0.999\).

During sampling, we apply the stabilization procedures described in
Sec.~\ref{sec:signed_rf_practice}. Specifically, we clip
\(\log\hat r_t\) to \([-20,20]\), lower-bound the denominator by
\(
    \max\!\left(
        (1+\alpha)-\alpha\hat r_t,\,
        10^{-3}
    \right),
\)
and cap the effective guidance scale according to
\(
    \lambda_t^\alpha
    \leftarrow
    \min\!\left(
        \lambda_t^\alpha,\lambda_{\max}
    \right).
\)
All ImageNet comparisons use \(50{,}000\) class-balanced samples. We use the
same initial Gaussian seeds across methods and NFEs, and report the best-FID
operating point from the corresponding sweep over \(\alpha\) or \(\omega\).

\paragraph{Runtime and computational cost.}
We train the density-ratio classifier for \(250\mathrm{K}\) steps on
\(8\) NVIDIA GH200 GPUs using bf16 mixed precision and
\texttt{torch.compile}. Training takes approximately \(10\) hours. For
comparison, the pretrained XL flow backbone requires approximately \(60\)
hours on \(16\) GPUs. Thus, the ratio classifier adds substantially less
training cost than the generative backbone.

At inference time, the main overhead of Signed RF comes from evaluating the
ratio classifier at each Euler step; the scalar computations required to
obtain \(\hat r_t\) and \(\lambda_t^\alpha\) are negligible in comparison. We
measure this overhead on a single NVIDIA GH200 GPU using bf16 mixed precision,
\texttt{torch.compile}, batch size \(100\), and \(64\) Euler steps. We exclude
checkpoint loading, compilation warmup, VAE decoding, and image writing from
the timing. Under this setup, CFG generates \(13.85\) images/s, corresponding to
\(72.22\) ms/image, with \(4.57\) GB of peak GPU memory. Signed RF generates
\(12.44\) images/s, corresponding to \(80.38\) ms/image, with \(5.05\) GB of
peak GPU memory. This amounts to an \(11.3\%\) increase in sampling time, or
\(8.16\) additional ms/image, together with \(0.48\) GB of additional peak
memory. Overall, the additional training and inference costs of Signed RF are
small relative to those of the underlying generative model.

\paragraph{Stability analysis.}
We first evaluate sensitivity to the training stage of the density-ratio
classifier. We use checkpoints collected after \(100\mathrm{K}\),
\(160\mathrm{K}\), \(200\mathrm{K}\), and \(250\mathrm{K}\) updates. For each
checkpoint and NFE, we sweep \(\alpha\) and report the operating point with the
best FID in Table~\ref{tab:imagenet_classifier_checkpoint_stability}. Performance varies
only slightly across checkpoints. The FID ranges are
\(1.824\)--\(1.845\) at \(16\) NFE,
\(1.536\)--\(1.550\) at \(32\) NFE, and
\(1.436\)--\(1.473\) at \(64\) NFE. IS, precision, and recall exhibit similarly
small variations. These results indicate that Signed RF does not require the
ratio classifier to be stopped at a narrowly selected training checkpoint.

We next study sensitivity to the guidance cap \(\lambda_{\max}\), which limits
the effective guidance when
\((1+\alpha)-\alpha\hat r_t\) becomes small because of ratio-estimation or
numerical error. Using the \(250\mathrm{K}\) classifier and fixing
\(\alpha=0.08\), Table~\ref{tab:imagenet_guidance_cap_stability} shows that moderate
caps produce similar results. At \(32\) NFE,
\(\lambda_{\max}\in\{3,5,10\}\) gives FID
\(1.518\), \(1.510\), and \(1.545\), respectively; at \(64\) NFE, the
corresponding FIDs are \(1.413\), \(1.431\), and \(1.477\). Fig.~\ref{fig:imagenet_guidance_scales} visualizes the effective guidance scale
along trajectories for several classes using shared initial seeds. The
guidance profiles vary across both time and class, confirming that Signed RF
applies nontrivial state-dependent guidance rather than a fixed schedule.
Overall, Signed RF is robust to both the classifier training checkpoint and
the choice of a moderate guidance cap.

\begin{figure}[t]
    \centering
    \includegraphics[width=1\linewidth]{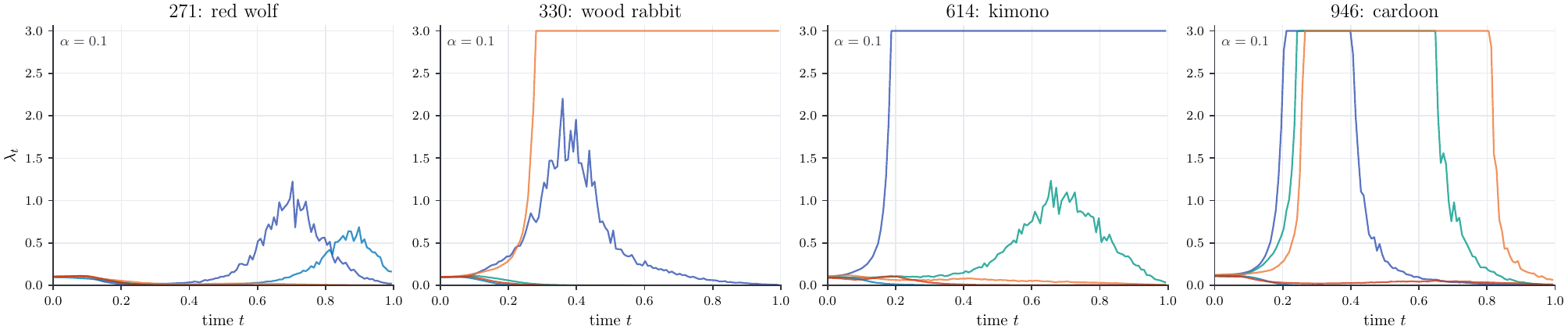}
    \caption{
        Effective guidance scale \(\lambda_t^\alpha\) along trajectories for four ImageNet classes.
        Each color corresponds to the same initial seed shared across panels.
        Despite identical seeds, the guidance profiles differ substantially across classes,
        illustrating the nontrivial and class-dependent behavior of the guidance.
    }
    \label{fig:imagenet_guidance_scales}
\end{figure}

\begin{table}
    \centering
    \small
    \caption{ImageNet classifier-checkpoint stability. All runs use \(\lambda_{\max}\) fixed to 10.}
    \label{tab:imagenet_classifier_checkpoint_stability}
    \vspace{0.5em}
    \setlength{\tabcolsep}{0pt}
    \begin{tabular}{@{\hspace{8pt}}c@{\hspace{9pt}}c@{\hspace{9pt}}c@{\hspace{10pt}}>{\centering\arraybackslash}p{0.39in}@{\hspace{3pt}}>{\centering\arraybackslash}p{0.39in}@{\hspace{3pt}}>{\centering\arraybackslash}p{0.39in}@{\hspace{3pt}}>{\centering\arraybackslash}p{0.39in}@{\hspace{8pt}}}
        \toprule
        NFE & Step & \(\alpha\) & FID \(\downarrow\) & IS \(\uparrow\) & Prec. \(\uparrow\) & Rec. \(\uparrow\) \\
        \midrule
        \multirow{4}{*}{16} & 100K & 0.10 & 1.845 & 299.7 & 0.796 & 0.615 \\
        & 160K & 0.10 & 1.827 & 295.2 & 0.802 & 0.616 \\
        & 200K & 0.10 & 1.824 & 296.8 & 0.801 & 0.615 \\
        & 250K & 0.08 & 1.827 & 293.3 & 0.794 & 0.611 \\
        \midrule
        \multirow{4}{*}{32} & 100K & 0.10 & 1.536 & 286.6 & 0.795 & 0.632 \\
        & 160K & 0.10 & 1.550 & 290.7 & 0.800 & 0.622 \\
        & 200K & 0.10 & 1.549 & 294.3 & 0.800 & 0.626 \\
        & 250K & 0.07 & 1.536 & 290.5 & 0.794 & 0.635 \\
        \midrule
        \multirow{4}{*}{64} & 100K & 0.10 & 1.436 & 278.9 & 0.792 & 0.638 \\
        & 160K & 0.10 & 1.473 & 287.3 & 0.799 & 0.627 \\
        & 200K & 0.10 & 1.471 & 291.4 & 0.799 & 0.627 \\
        & 250K & 0.07 & 1.469 & 289.1 & 0.792 & 0.633 \\
        \bottomrule
    \end{tabular}
\end{table}

\begin{table}[H]
    \centering
    \small
    \caption{ImageNet guidance-cap stability. We use the 250K classifier, fix \(\alpha\) to 0.08, and vary \(\lambda_{\max}\).}
    \label{tab:imagenet_guidance_cap_stability}
    \vspace{0.5em}
    \setlength{\tabcolsep}{0pt}
    \begin{tabular}{@{\hspace{8pt}}c@{\hspace{9pt}}c@{\hspace{10pt}}>{\centering\arraybackslash}p{0.39in}@{\hspace{3pt}}>{\centering\arraybackslash}p{0.39in}@{\hspace{3pt}}>{\centering\arraybackslash}p{0.39in}@{\hspace{3pt}}>{\centering\arraybackslash}p{0.39in}@{\hspace{8pt}}}
        \toprule
        NFE & \(\lambda_{\max}\) & FID \(\downarrow\) & IS \(\uparrow\) & Prec. \(\uparrow\) & Rec. \(\uparrow\) \\
        \midrule
        \multirow{3}{*}{16} & 3  & 1.918 & 275.8 & 0.793 & 0.615 \\
        & 5  & 1.845 & 289.0 & 0.797 & 0.612 \\
        & 10 & 1.827 & 293.3 & 0.794 & 0.611 \\
        \midrule
        \multirow{3}{*}{32} & 3  & 1.518 & 278.4 & 0.798 & 0.630 \\
        & 5  & 1.510 & 288.6 & 0.800 & 0.629 \\
        & 10 & 1.545 & 293.6 & 0.796 & 0.625 \\
        \midrule
        \multirow{3}{*}{64} & 3  & 1.413 & 278.2 & 0.797 & 0.627 \\
        & 5  & 1.431 & 286.6 & 0.797 & 0.627 \\
        & 10 & 1.477 & 292.0 & 0.795 & 0.632 \\
        \bottomrule
    \end{tabular}
\end{table}

\subsection{Additional Analytic Negative-Flow Details}
\label{sec:analytic_negative_flow_details}

\paragraph{Analytic flow under a Gaussian source.}
Let \(\pi_1\) be the empirical distribution supported on
\(\{\bx_1^{(i)}\}_{i=1}^N\subset\mathbb R^D\):
\vspace{-0.8em}
\[
    \pi_1(\bx)
    =
    \frac{1}{N}
    \sum_{i=1}^{N}
    \delta_{\bx_1^{(i)}}(\bx).
\]
Consider the RF interpolation
\(
    \bX_t
    =
    t\bX_1+(1-t)\bX_0,
    \qquad
    \bX_0\sim\mathcal N(\boldsymbol 0,\mI).
\)
The corresponding velocity field is
\(
    v_t(\bx)
    =
    \mathbb E\!\left[
        \bX_1-\bX_0
        \,\middle|\,
        \bX_t=\bx
    \right].
\) For \(0\leq t<1\), conditioned on \(\bX_1=\bx_1^{(i)}\), the intermediate
state follows
\[
    \bX_t
    \sim
    \mathcal N\!\left(
        t\bx_1^{(i)},
        (1-t)^2\mI
    \right).
\]
Averaging over the empirical data points therefore gives the time-\(t\)
marginal
\begin{equation}
    \pi_t(\bx)
    =
    \frac{1}{N}
    \sum_{i=1}^{N}
    \mathcal N\!\left(
        \bx;\,
        t\bx_1^{(i)},
        (1-t)^2\mI
    \right).
    \label{eq:analytic_flow_marginal}
\end{equation}

Given \(\bX_t=\bx\), the posterior probability that the endpoint is
\(\bx_1^{(i)}\) is
\[
    w_i(\bx,t)
    \coloneqq
    \mathbb P\!\left(
        \bX_1=\bx_1^{(i)}
        \,\middle|\,
        \bX_t=\bx
    \right),
\]
where
\begin{equation}
    w_i(\bx,t)
    =
    \frac{
        \exp\!\left(
            -\|\bx-t\bx_1^{(i)}\|^2
            \big/
            \bigl(2(1-t)^2\bigr)
        \right)
    }{
        \displaystyle
        \sum_{j=1}^{N}
        \exp\!\left(
            -\|\bx-t\bx_1^{(j)}\|^2
            \big/
            \bigl(2(1-t)^2\bigr)
        \right)
    }.
    \label{eq:analytic_flow_weights}
\end{equation}

Using
\(
    \bX_0=(\bX_t-t\bX_1)/(1-t)
\),
we obtain
\begin{align}
    v_t(\bx)
    &=
    \mathbb E\!\left[
        \bX_1-\bX_0
        \,\middle|\,
        \bX_t=\bx
    \right]
    \notag
    \\
    &=
    \mathbb E\!\left[
        \frac{\bX_1-\bx}{1-t}
        \,\middle|\,
        \bX_t=\bx
    \right]
    \notag
    \\
    &=
    \sum_{i=1}^{N}
    w_i(\bx,t)
    \frac{\bx_1^{(i)}-\bx}{1-t}.
    \label{eq:analytic_flow_velocity}
\end{align}
Thus, the exact velocity is obtained by moving toward the posterior-weighted
average of the empirical endpoints.

For numerical evaluation, the corresponding log marginal can be written as
\begin{equation}
    \log\pi_t(\bx)
    =
    \log
    \sum_{i=1}^{N}
    \exp\!\left(
        -\frac{
            \|\bx-t\bx_1^{(i)}\|^2
        }{
            2(1-t)^2
        }
    \right)
    -
    \log N
    -
    \frac{D}{2}\log(2\pi)
    -
    D\log(1-t).
    \label{eq:analytic_flow_log_density}
\end{equation}
These density, weight, velocity, and log-density formulas apply for \(t<1\);
at \(t=1\), the terminal law is the empirical measure \(\pi_1\).

\paragraph{PyTorch implementation.}
The implementation below evaluates the analytic velocity for a reference set
of data latents with shape \([N,C,H,W]\) and a batch of intermediate states
\(\bx_t\) with shape \([B,C,H,W]\). After flattening the latents, it computes
the \(B\times N\) matrix of squared distances
\(\|\bx_t-t\bx_1^{(i)}\|^2\). A softmax over the reference points gives the
posterior weights in Eq.~\ref{eq:analytic_flow_weights}, from
which the posterior-weighted endpoint mean and the velocity are obtained.
When requested, the same unnormalized log weights are reused to evaluate
\(\log\pi_t(\bx_t)\) through a log-sum-exp operation.

\begin{paperpythonlisting}{PyTorch implementation of the Gaussian analytic flow}
class GaussianAnalyticFlow:
    def __init__(self, data):
        # data: [N, C, H, W], e.g., ImageNet autoencoder latents.
        self.data_shape = tuple(data.shape[1:])
        self.data = data.detach().flatten(1).float()
        self.data_sq_norm = self.data.square().sum(dim=1)

    @torch.no_grad()
    def velocity_and_logp(self, x_t, t, return_logp=False):
        # x_t: [B, C, H, W]
        # t: scalar tensor or tensor of shape [B]
        batch_size = x_t.shape[0]
        if t.ndim == 0:
            t = t.expand(batch_size)
        t = t.to(device=x_t.device, dtype=torch.float32)

        x_shape = x_t.shape
        x_flat = x_t.flatten(1).float()
        _, latent_dim = x_flat.shape

        data = self.data.to(
            device=x_t.device,
            dtype=x_flat.dtype,
        )
        data_sq_norm = self.data_sq_norm.to(
            device=x_t.device,
            dtype=x_flat.dtype,
        )
        num_data = data.shape[0]

        # Compute ||x_t - t x_1^{(i)}||^2 for every batch--data pair.
        x_sq_norm = x_flat.square().sum(dim=1)
        x_data_inner_product = x_flat @ data.T
        sq_dist = (
            x_sq_norm[:, None]
            - 2.0 * t[:, None] * x_data_inner_product
            + t.square()[:, None] * data_sq_norm[None, :]
        )

        one_minus_t = (1.0 - t).clamp_min(1e-12)
        log_weights = (
            -0.5
            * sq_dist
            / one_minus_t.square()[:, None]
        )
        weights = torch.softmax(log_weights, dim=1)

        # Posterior mean E[X_1 | X_t = x_t].
        endpoint_mean = weights @ data
        velocity_flat = (
            endpoint_mean - x_flat
        ) / one_minus_t[:, None]
        velocity = velocity_flat.reshape(x_shape)

        if not return_logp:
            return velocity, None

        log_mixture = (
            torch.logsumexp(log_weights.float(), dim=1)
            - math.log(float(num_data))
        )
        log_normalizer = (
            -0.5 * latent_dim * math.log(2.0 * math.pi)
            - latent_dim * torch.log(one_minus_t.float())
        )
        logp = log_mixture + log_normalizer

        return velocity, logp
\end{paperpythonlisting}

\paragraph{ImageNet anti-memorization sampler.}
For the anti-memorization experiments in
Sec.~\ref{sec:anti_memorization}, we construct a separate analytic
negative branch for each ImageNet class. Given the training latents of class
\(c\),
\[
    \mathcal D_c
    =
    \{\bx_1^{(i)}\}_{i=1}^{N_c}
    \subset
    \mathbb R^D,
    \qquad
    \pi_{1,c}^-(\bx)
    =
    \frac{1}{N_c}
    \sum_{i=1}^{N_c}
    \delta_{\bx_1^{(i)}}(\bx),
\]
Eqs.~\ref{eq:analytic_flow_marginal},
\ref{eq:analytic_flow_velocity}, and
\ref{eq:analytic_flow_log_density} provide the corresponding
marginal, negative velocity \(v_t^-\), and log-density
\(\log\pi_{t,c}^-\) in closed form. We augment each class-specific reference
set \(\mathcal D_c\) with horizontally flipped training latents.

The positive velocity \(v_t^+\) is provided by a pretrained class-conditional
ImageNet flow model using CFG with \(\omega=1.0\). Because the positive-branch
density is not available in closed form, we use the online-tracked ratio
described in Sec.~\ref{sec:signed_rf_practice}. For each trajectory, we
maintain
\(
    u_t
    \approx
    \log r_t(\bx_t)
    =
    \log
    \frac{\pi_{t,c}^-(\bx_t)}
         {\pi_{t,c}^+(\bx_t)},
\)
with \(u_0=0\), since both branches share the same Gaussian source.

At each Euler step, we evaluate the pretrained positive velocity \(v_t^+\)
and the analytic negative velocity \(v_t^-\). We then recover the branch
scores using the Gaussian-source RF identity
\(
    s_t^\pm(\bx)
    =
    \frac{
        t\,v_t^\pm(\bx)-\bx
    }{
        1-t
    }
\)~\citep{liu2025let},
and update \(u_t\) according to
Eq.~\ref{eq:online_ratio_tracking_restate}. The only required divergence term is
\(
    \nabla\!\cdot
    \bigl(
        v_t^+-v_t^-
    \bigr),
\)
which we estimate using the Hutchinson trace
estimator~\citep{hutchinson1989stochastic}. We apply the same ratio clipping,
denominator lower bound, and guidance cap used in the ImageNet-256 experiments
above.

\paragraph{Evaluation protocol.}
For the targeted memorization stress test, we first generate a pool of
\(50\mathrm{K}\) base-model samples for each selected class and compute the
nearest-neighbor distance from each sample to the corresponding class-specific
training set. We rank samples using the \(L_2\) distance between SSCD
embeddings, which is more sensitive to copied visual details than pixel-space
or latent-space distances in our ImageNet experiments.

For each class, we retain the \(600\) highest-risk initial seeds, defined as
those producing the smallest SSCD \(L_2\) nearest-neighbor distances under the
base model. Starting from exactly these same initial noise samples, we then
rerun the base model, the SPELL baselines, and Data Repulsive Flow. This
shared-seed protocol ensures that differences in memorization are attributable
to the sampling method rather than to the selected initial states.

We report the \(5\)th, \(10\)th, and \(25\)th percentiles, together with the
mean, of the SSCD \(L_2\) nearest-neighbor distance. Larger values indicate
that the generated samples are farther from the training set and therefore
exhibit less training-data similarity. For the standard generation-quality
evaluation, we instead generate \(50\mathrm{K}\) class-balanced samples from
random initial noise, while sharing the same initial seeds across all compared
methods.

Table~\ref{tab:imagenet_memorization_per_class} extends the
class-\(248\) results from the main paper to five ImageNet classes and several
values of \(\alpha\). Across all evaluated classes, increasing \(\alpha\)
consistently raises the lower-tail nearest-neighbor distances. This shows that
Data Repulsive Flow provides a smooth and interpretable control over the
strength of training-data repulsion, rather than exhibiting an abrupt change
at a particular setting.

We additionally report nearest-neighbor distances in the autoencoder latent
space. These distances are not used to select the high-risk seeds and therefore
serve as an independent evaluation metric. Their consistent increase supports
the conclusion that the observed effect is not specific to the SSCD
\(L_2\) metric used for seed selection and primary reporting.

\begin{table}[t]
    \centering
    \renewcommand{\arraystretch}{1.08}
    \caption{
        Additional per-class \(\alpha\) sweeps for ImageNet anti-memorization.
        For each class, we report nearest-neighbor SSCD \(L_2\) and latent \(L_2\) distances on the same \(600\) high-risk initial seeds used by the base model and Data Repulsive Flow.
        Larger values indicate lower similarity to the nearest training example.
        The seeds are selected by base-model SSCD \(L_2\); latent \(L_2\) is reported as an additional metric.
        For each metric we report the lower-tail \(P_{05}\), \(P_{10}\), and \(P_{25}\) quantiles and the mean.
    }
    \label{tab:imagenet_memorization_per_class}
    \vspace{0.5em}
    \small
    \setlength{\tabcolsep}{2.5pt}
    \begin{tabular}{@{}c@{\hspace{10pt}}c@{\hspace{11pt}}c@{\hspace{16pt}}cccc@{\hspace{18pt}}cccc@{}}
        \toprule
        \multirow{2}{*}{Class} & \multirow{2}{*}{Method} & \multirow{2}{*}{\(\alpha\)}
        & \multicolumn{4}{c}{SSCD \(L_2\) \(\uparrow\)}
        & \multicolumn{4}{c}{Latent \(L_2\) \(\uparrow\)} \\
        \cmidrule(lr){4-7}\cmidrule(l){8-11}
        & & & \(P_{05}\) & \(P_{10}\) & \(P_{25}\) & Mean & \(P_{05}\) & \(P_{10}\) & \(P_{25}\) & Mean \\
        \midrule
        \multirow{5}{*}{\shortstack[c]{151\\(Chihuahua)}} & Base & -- & 0.921 & 0.974 & 1.017 & 1.033 & 47.27 & 51.66 & 58.39 & 62.98 \\
        & Ours & 0.1 & 0.928 & 0.976 & 1.020 & 1.040 & 48.38 & 51.96 & 58.95 & 63.22 \\
        & Ours & 1.0 & 0.944 & 0.991 & 1.033 & 1.068 & 48.83 & 53.83 & 60.35 & 64.21 \\
        & Ours & 5.0 & 0.983 & 1.013 & 1.063 & 1.107 & 54.28 & 57.04 & 61.66 & 65.23 \\
        & Ours & 7.5 & \textbf{0.993} & \textbf{1.023} & \textbf{1.073} & \textbf{1.119} & \textbf{54.50} & \textbf{57.72} & \textbf{61.93} & \textbf{65.59} \\
        \midrule
        \multirow{5}{*}{\shortstack[c]{248\\(Eskimo dog,\\ husky)}} & Base & -- & 0.921 & 0.944 & 0.982 & 0.993 & 54.88 & 56.99 & 60.59 & 63.81 \\
        & Ours & 1.0 & 0.939 & 0.962 & 0.992 & 1.024 & 56.31 & 58.15 & 61.29 & 64.28 \\
        & Ours & 3.0 & 0.956 & 0.975 & 1.005 & 1.045 & 57.71 & 58.78 & 61.62 & 64.63 \\
        & Ours & 5.0 & 0.958 & 0.979 & 1.012 & 1.056 & 57.69 & 59.28 & 61.87 & 64.99 \\
        & Ours & 7.5 & \textbf{0.964} & \textbf{0.987} & \textbf{1.020} & \textbf{1.066} & \textbf{57.99} & \textbf{59.91} & \textbf{62.36} & \textbf{65.36} \\
        \midrule
        \multirow{5}{*}{\shortstack[c]{555\\(fire engine,\\ fire truck)}} & Base & -- & 0.967 & 0.978 & 0.998 & 1.010 & 54.09 & 55.58 & 57.54 & 59.69 \\
        & Ours & 0.1 & 0.968 & 0.980 & 0.998 & 1.013 & 54.27 & 55.62 & 57.54 & 59.72 \\
        & Ours & 1.0 & 0.969 & 0.983 & 1.002 & 1.031 & 54.42 & 55.97 & 57.67 & 59.80 \\
        & Ours & 5.0 & 0.978 & 0.997 & 1.026 & 1.070 & 55.19 & 56.24 & 58.27 & 60.41 \\
        & Ours & 7.5 & \textbf{0.983} & \textbf{1.004} & \textbf{1.039} & \textbf{1.081} & \textbf{55.49} & \textbf{56.55} & \textbf{58.54} & \textbf{60.72} \\
        \midrule
        \multirow{5}{*}{\shortstack[c]{805\\(soccer ball)}} & Base & -- & 0.854 & 0.873 & 0.904 & 0.929 & 48.96 & 50.17 & 52.80 & 59.07 \\
        & Ours & 0.1 & 0.852 & 0.875 & 0.911 & 0.936 & 49.17 & 50.50 & 53.05 & 59.40 \\
        & Ours & 1.0 & 0.865 & 0.885 & 0.922 & 0.977 & 50.15 & 51.64 & 54.47 & 60.62 \\
        & Ours & 5.0 & 0.890 & 0.919 & 0.968 & 1.050 & 51.45 & 53.91 & 57.39 & 62.82 \\
        & Ours & 7.5 & \textbf{0.902} & \textbf{0.939} & \textbf{0.993} & \textbf{1.073} & \textbf{52.67} & \textbf{54.44} & \textbf{58.22} & \textbf{63.78} \\
        \midrule
        \multirow{5}{*}{\shortstack[c]{975\\(lakeside,\\ lakeshore)}} & Base & -- & 0.949 & 0.992 & 1.042 & 1.061 & 40.69 & 43.14 & 46.66 & 51.38 \\
        & Ours & 0.1 & 0.964 & 1.004 & 1.051 & 1.079 & 42.06 & 44.01 & 47.31 & 51.85 \\
        & Ours & 1.0 & 0.981 & 1.036 & 1.079 & 1.136 & 43.03 & 45.05 & 48.66 & 52.98 \\
        & Ours & 5.0 & 1.025 & 1.065 & 1.149 & 1.186 & 43.99 & 45.88 & 49.70 & 53.33 \\
        & Ours & 7.5 & \textbf{1.039} & \textbf{1.089} & \textbf{1.164} & \textbf{1.195} & \textbf{44.23} & \textbf{45.94} & \textbf{50.05} & \textbf{53.64} \\
        \bottomrule
    \end{tabular}
\end{table}

\paragraph{Runtime and memory cost.}
The analytic negative branch requires no additional training. For a batch of
size \(B\), a reference set of size \(N\), and latent dimension \(D\), its
inference cost is dominated by computing the state--reference Gram matrix and
the responsibility-weighted endpoint average. Both operations scale as
\(\mathcal O(BND)\). The memory cost consists primarily of the reference
matrix, which requires \(\mathcal O(ND)\) storage, and the temporary
\(B\times N\) matrices of logits and posterior responsibilities.

In our class-conditional ImageNet experiments, each class contains
approximately \(1.3\mathrm{K}\) reference latents, or
\(2.6\mathrm{K}\) after horizontal-flip augmentation. At this scale, the
analytic branch is small relative to the pretrained flow backbone in both
runtime and memory.

Fig.~\ref{fig:analytic_flow_runtime_memory} benchmarks the analytic branch on a
single NVIDIA GH200 GPU with the ImageNet latent dimension \(D=4096\).
Velocity-only, log-density-only, and joint evaluation have nearly identical
runtime because all three reuse the same \(B\times N\) distance logits. At the
class-level reference-set sizes used in our experiments, each evaluation takes
only a few milliseconds and requires substantially less than \(0.1\) GB of
additional memory.

Both runtime and reference storage grow approximately linearly with \(N\).
At \(N=2^{18}\), evaluation remains below \(8\) ms. At
\(N=2^{22}\), it takes on the order of \(10^2\) ms, while storing the
reference set in fp32 requires \(64\) GB. These measurements confirm that the
class-specific analytic branch is not a computational bottleneck in our ImageNet
anti-memorization experiments. For substantially larger protected datasets,
the reference set can be processed in chunks while incrementally accumulating
the log-sum-exp normalizer and responsibility-weighted endpoint average. This
keeps memory bounded at the cost of additional passes over the reference set.

\begin{figure}[H]
    \centering
    \includegraphics[width=1\linewidth]{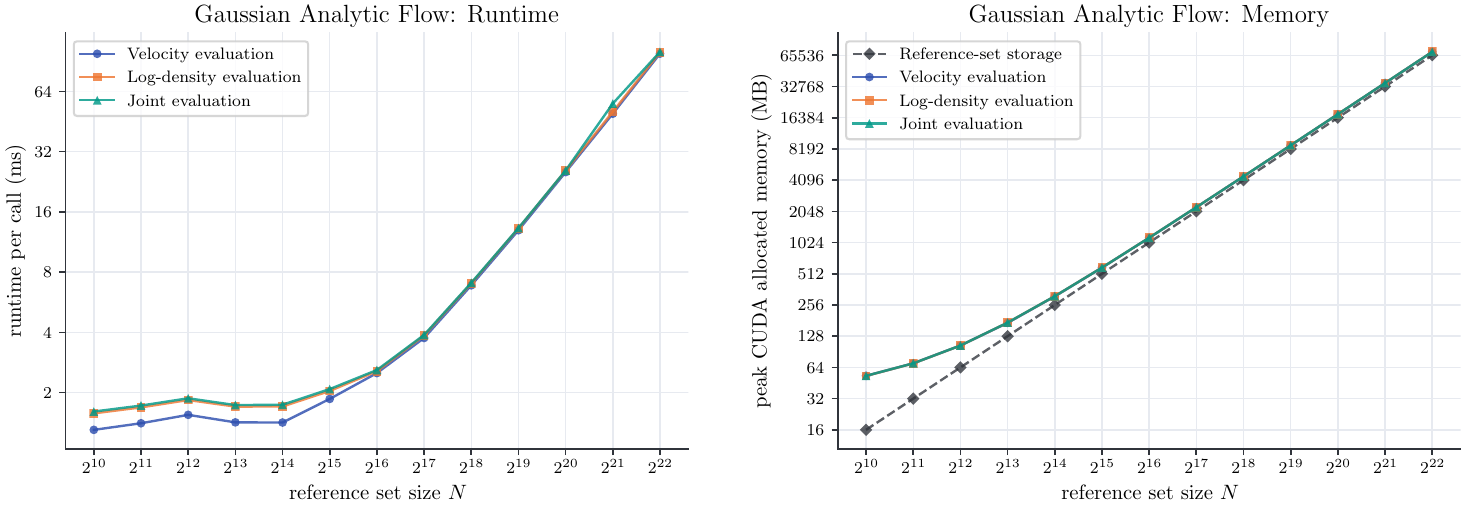}
    \caption{
        Runtime and peak CUDA memory of the vectorized Gaussian analytic flow on an NVIDIA GH200 GPU.
        The benchmark sweeps the reference-set size \(N\) at fixed ImageNet latent dimension \(D=4096\).
        Joint velocity and log-density evaluation is nearly as cheap as either quantity alone because the same squared-distance logits are reused.
    }
    \label{fig:analytic_flow_runtime_memory}
\end{figure}

\subsection{Additional Concept Suppression Details}
\label{sec:concept_suppression_prompts}

For the main-text concept-suppression examples, the positive prompt defines the desired visual content and the negative prompt defines the concept or attribute to suppress. We use the following prompt pairs verbatim.

\begin{enumerate}[leftmargin=1.6em,itemsep=0.35em,topsep=0.25em]
    \item \textbf{Positive:} A high-aesthetic anime portrait, upper body, night street background, expressive eyes, polished shading, clean linework, masterpiece, best quality\\
    \textbf{Negative:} lowres, blurry, bad anatomy, bad hands, missing fingers, extra digits, jpeg artifacts, cropped, low quality, watermark, signature

    \item \textbf{Positive:} A clean contemporary house, strong geometric lines, large glass panels, minimalist architecture photography, crisp daylight, centered framing, unobstructed surroundings, pure architectural focus\\
    \textbf{Negative:} trees, bushes, leaves, greenery, vines, overgrown plants, grass, lawn, meadow, turf, wildflowers, landscaped garden

    \item \textbf{Positive:} A single large paper crane flying low over a quiet lake at dawn, close-up, centered composition, the crane fills much of the frame, poetic, minimalist, soft mist\\
    \textbf{Negative:} origami, folded paper, craft, crease

    \item \textbf{Positive:} A solitary young woman sitting by a large train window during rain, soft city lights outside, reflective atmosphere, cinematic portrait\\
    \textbf{Negative:} sad, crying, depressed, grief, despair, tears, misery

    \item \textbf{Positive:} A cheerful cartoon mouse with perfectly round black ears, red shorts, yellow shoes, white gloves, simple face, vintage mascot design, standing confidently against a clean pastel background\\
    \textbf{Negative:} Mickey Mouse, Disney character

    \item \textbf{Positive:} A pale angel standing silently in a winter garden, marble-like skin, white wings dusted with snow, stillness, ethereal realism\\
    \textbf{Negative:} stone statue, sculpture pedestal, carved marble, museum artifact

    \item \textbf{Positive:} A world-class scientist presenting a breakthrough discovery in a sleek research lab, confident, inspiring, magazine-cover composition\\
    \textbf{Negative:} old white man, male stereotype, lab coat cliche, Einstein-like appearance

    \item \textbf{Positive:} A visionary CEO standing in a glass office overlooking a futuristic city, calm authority, elegant natural light\\
    \textbf{Negative:} middle-aged white man in dark suit, corporate cliche, Wall Street stereotype

    \item \textbf{Positive:} A dynamic cinematic shot of a lean teenage superhero crouching high above a city, wearing an iconic red-and-blue skin-tight suit defined by black webbing lines, a bold spider emblem on the chest, and oversized white pointed eye lenses on a full-face mask, athletic, agile, youthful, with a strong sense of motion and classic urban superhero energy\\
    \textbf{Negative:} Spider-Man, Marvel, Peter Parker, superhero franchise

    \item \textbf{Positive:} Sweeping tulip fields under soft spring sunlight, wide-angle landscape view, richly layered flowers, fresh air, elegant color harmony, cinematic countryside photography\\
    \textbf{Negative:} red, crimson, scarlet, ruby tones
\end{enumerate}

\end{document}